\theoremstyle{plain}
\newtheorem{theorem}{Theorem}[section]
\newtheorem{lemma}[theorem]{Lemma}
\theoremstyle{definition}
\newtheorem{definition}[theorem]{Definition}
\theoremstyle{remark}
\def\eqref#1{equation~\ref{#1}}
\def\1{\bm{1}}
\DeclareMathAlphabet{\mathsfit}{\encodingdefault}{\sfdefault}{m}{sl}
\SetMathAlphabet{\mathsfit}{bold}{\encodingdefault}{\sfdefault}{bx}{n}
\def\sD{{\mathbb{D}}}
\icmltitlerunning{Trustworthy Alignment of Retrieval-Augmented Large Language Models via Reinforcement Learning}
\newcommand{\method}{\textsc{Trustworthy-Alignment}}
\newcommand{\textblue}[1]{\textcolor{blue}{#1}}
\newcommand{\textred}[1]{\textcolor{red}{#1}}
\begin{document}

\twocolumn[
\icmltitle{Trustworthy Alignment of Retrieval-Augmented Large Language Models \\
			via Reinforcement Learning}



\icmlsetsymbol{equal}{*}

\begin{icmlauthorlist}
\icmlauthor{Zongmeng Zhang}{ustc}
\icmlauthor{Yufeng Shi}{ustc}
\icmlauthor{Jinhua Zhu}{ustc}
\icmlauthor{Wengang Zhou}{ustc,iai}
\icmlauthor{Xiang Qi}{ant}
\icmlauthor{Peng Zhang}{ant}
\icmlauthor{Houqiang Li}{ustc,iai}
\end{icmlauthorlist}

\icmlaffiliation{ustc}{University of Science and Technology of China}
\icmlaffiliation{iai}{Institute of Artificial Intelligence, Hefei Comprehensive Nation Science Center}
\icmlaffiliation{ant}{Ant Group}

\icmlcorrespondingauthor{Wengang Zhou}{zhwg@ustc.edu.cn}
\icmlcorrespondingauthor{Houqiang Li}{lihq@ustc.edu.cn}

\icmlkeywords{Large Language Models, Alignment, Reinforcement Learning, Retrieval-Augmented Generation, Knowledge Conflicts, Trustworthy Machine Learning}

\vskip 0.3in
]



\printAffiliationsAndNotice{}  

\begin{abstract}
Trustworthiness is an essential prerequisite for the real-world application of large language models. In this paper, we focus on the trustworthiness of language models with respect to retrieval augmentation. 
Despite being supported with external evidence, retrieval-augmented generation still suffers from hallucinations, one primary cause of which is the conflict between contextual and parametric knowledge. 
We deem that retrieval-augmented language models have the inherent capabilities of supplying response according to both contextual and parametric knowledge. 
Inspired by aligning language models with human preference, we take the first step towards aligning retrieval-augmented language models to a status where it responds relying merely on the external evidence and disregards the interference of parametric knowledge. 
Specifically, we propose a reinforcement learning based algorithm \method{}, theoretically and experimentally demonstrating large language models' capability of reaching a trustworthy status without explicit supervision on how to respond. 
Our work highlights the potential of large language models on exploring its intrinsic abilities by its own and expands the application scenarios of alignment from fulfilling human preference to creating trustworthy agents. Our code is available at \url{https://github.com/zmzhang2000/trustworthy-alignment}.
\end{abstract}

\section{Introduction}

Recent advancement in Large Language Models (LLMs) has significantly boosted the application of Artificial Intelligence (AI) systems across a wide range of fields, such as dialogue assistants \cite{OpenAI-2023-GPT4} and autonomous agents~\cite{Wang-2023-Surveya}. Researchers contend that one of the most powerful LLMs, \emph{i.e.,} GPT-4, could reasonably be viewed as a spark of Artificial General Intelligence (AGI) considering its capabilities in language, vision, mathematics, coding, \emph{etc}~\cite{Bubeck-2023-Sparks}.

Despite these remarkable capabilities, the lack of trustworthiness is still a challenging obstacle to the application of LLMs. 
As probabilistic models, LLMs occasionally generate output which is plausible but deviating from source input~\cite{Adlakha-2023-Evaluating} or factual knowledge~\cite{Min-2023-FActScore,Muhlgay-2023-Generating,Li-2023-HaluEval}. 
This phenomenon is referred to as \emph{hallucination} \cite{Maynez-2020-Faithfulness,Ji-2023-Survey}, which poses tangible risks to real-world application scenarios with low tolerance of error such as medical diagnosis \cite{Nori-2023-Capabilities} and legal consultation \cite{Cui-2023-ChatLaw,Nguyen-2023-Brief}.

A common and straightforward approach to mitigate hallucinations in LLMs is to leverage non-parametric knowledge source through \emph{Retrieval-Augmented Generation} (RAG)~\cite{Lewis-2020-RetrievalAugmented,Shuster-2021-Retrieval}. 
Specifically, RAG retrieves supporting evidence relevant to the source input from an external knowledge base through a retriever, and generates the output conditioning on both the source input and retrieved evidence. The retrieve-and-generate paradigm proves to significantly enhance the factuality and reliability of LLMs \cite{Lewis-2020-RetrievalAugmented,Borgeaud-2022-Improving,Lazaridou-2022-Internetaugmented}.

However, incorporating retrieval into language generation encounters a new problem of \emph{knowledge conflicts} \cite{Longpre-2021-EntityBased} that still impedes the progress towards trustworthy language models. 
LLMs commonly learn from massive unsupervised data during pre-training and store the learned knowledge in their parameters. 
Conflict emerges as contextual knowledge in the retrieved evidence contrasting with the inner knowledge stored in LLMs' parameters, which triggers ``hallucinations'' if an LLM holds a strong belief in its parametric knowledge and produces response inconsistent with the provided context. 
This phenomenon is unexpected as the performance of RAG will be restricted by hallucinations if the retrieved evidence is not accurate and helpful enough \cite{Karpukhin-2020-Dense,Shi-2023-REPLUG,Ren-2023-Investigating,Mallen-2023-When}. 
We focus our research on the retrieval-augmented generation setting where an LLM is deemed trustworthy if it answers questions based on contextual rather than parametric knowledge.

A wealth of studies \cite{Longpre-2021-EntityBased,Chen-2022-Rich,Neeman-2023-DisentQA,Zhou-2023-Contextfaithful} has been dedicated to this problem owing to its obstructive impact on trustworthy LLMs. 
These efforts have discovered LLMs' dilemma between the parameters and retrieved evidence. 
In other words, LLMs may have capabilities of responding according to both contextual and parametric knowledge, and struggles to determine which to rely on. 
The inherent capabilities conforms to the foundation of a recently emerged technology in the field of AI: \emph{alignment} \cite{Ji-2023-AI}, which aims at encouraging a instruction-following language model~\cite{Wei-2022-FINETUNED} to behave in line with human intentions and values \cite{Leike-2018-Scalable} by reinforcement learning from human feedback (RLHF) \cite{Ouyang-2022-Training}. 
It is worth noting that alignment only ``unlocks'' capabilities LLMs already had, but does not teach new capabilities to the pre-trained and instruction-tuned models~\cite{Lowe-2022-Aligning}.

Inspired by the success of aligning LLMs with human intentions and values, we take the first step towards aligning retrieval-augmented LLM to a trustworthy status where it supplies responses in accordance with merely the retrieved evidence and ignores the parametric knowledge. 
We hypothesize that LLMs have tendencies of generating response in line with both contextual and parametric knowledge. 
Based on this, we address the problem of knowledge conflicts by aligning retrieval-augmented LLMs to trustworthy status via reinforcement learning (\textbf{\method}). In particular, we elaborate our work by answering the following research questions:
\begin{enumerate}[label=(${RQ}_{{\arabic*}}$),leftmargin=35pt]
	\itemsep-0.4em 
	\item \label{Q1}
	\emph{Does the hypothesis hold that retrieval-augmented LLMs have inherent capabilities of responding according to both contextual and parametric knowledge?} (\cref{sec:preliminary_study})
	\item \label{Q2}
	\emph{How can we design an algorithm that enhances the trustworthiness of retrieval-augmented LLMs by aligning them to a trustworthy status via reinforcement learning?} (\cref{sec:trustworthy_alignment})
	\item \label{Q3}
	\emph{How does the proposed algorithm perform and what are the side effects of aligning retrieval-augmented LLMs to trustworthy status?} (\cref{sec:experiments})
\end{enumerate}

We demonstrate that when \emph{merely providing what to achieve}, retrieval-augmented LLMs are capable of aligning itself to a status where it furnishes responses only relying on the contextual knowledge \emph{without explicit supervision on how to respond}. Our work highlights the potential of LLMs on exploring its intrinsic capabilities by its own and expands the application scenarios of alignment from fulfilling human intention and values to creating trustworthy agents.

\section{Preliminary Study}
\label{sec:preliminary_study}

In this section, we first provide the background of \method{} by formulating the task of retrieval-augmented generation, and discuss the problem of knowledge conflicts. Subsequently, we introduce answer probability, the proxy metric for model tendency analysis. Finally, we present the model tendency under different human-curated prompts, answering the prerequisite question of our work: \ref{Q1} \emph{does the hypothesis hold that retrieval-augmented LLMs have inherent capabilities of responding according to both contextual and parametric knowledge?}

\subsection{Retrieval-Augmented Generation}
 
A language model is a parametric probabilistic model $\pi_\theta$ conditioned on the input text. This paper focuses on the question answering (QA) task where the input text is a question $q$. The generation process is then formulated as sampling a response $y$ from the language model:
\begin{equation}
	\label{eqn:llm}
	y \sim \pi_\theta(y|q) = \prod_{t=0}^{T-1} \pi_\theta(y_t|q, y_{<t}),
\end{equation}
where both $x$ and $y$ are sequences of tokens, $t$ is the token index, and $T$ is the token number of the response.  

Under the retrieval-augmented setting, we have an additional context $c$ as a supporting evidence for answering the question, which is typically retrieved from a large knowledge base, \textit{e.g.}, Wikipedia. Following \citet{Brown-2020-Language}, we formulate the retrieval-augmented generation task by reformatting the question $q$ and context $c$ as a natural language instruction using a prompting function $f(q, c)$:
\begin{equation}
	y \sim \pi_\theta(y|q, c) = \prod_{t=0}^{T-1} \pi_\theta(y_t|f(q, c), y_{<t}).
\end{equation}

\subsection{Knowledge Conflicts}
Ideally, a retrieval-augmented LLM responds faithfully in line with the knowledge from $c$. However, the reality falls short of expectations since we have no idea whether the contextual knowledge in $c$ or the parametric knowledge in $\theta$ contributes more to LLM's decision. The \emph{knowledge conflict} \cite{Longpre-2021-EntityBased} emerges when contextual knowledge diverges from the parametric knowledge.

We simulate knowledge conflicts by constructing counterfactual data following the entity substitution framework in \citet{Longpre-2021-EntityBased}. 
Specifically, we have an original triplet $(q, c^o, y^o)$, where $y^o$ is the original right answer of question $q$ and is included in the context $c^o$ so that it can be inferred from $q$ and $c^o$. 
For each sample, we detect the corresponding entity of $y^o$ and replace it with a another entity $y^s$ of the same type. 
Meanwhile, $c^o$ is transformed to a new context $c^s$ by replacing all of the answer entity mentions. 
We expect the language model to supply a response including the substituted answer $y^s$ given the counterfactual context $c^s$.

\subsection{Tendency Analysis}
\label{sec:tendency_analysis}

\begin{figure}[t]
	\vskip 0.2in
	\begin{center}
		\centerline{\includegraphics[width=\columnwidth]{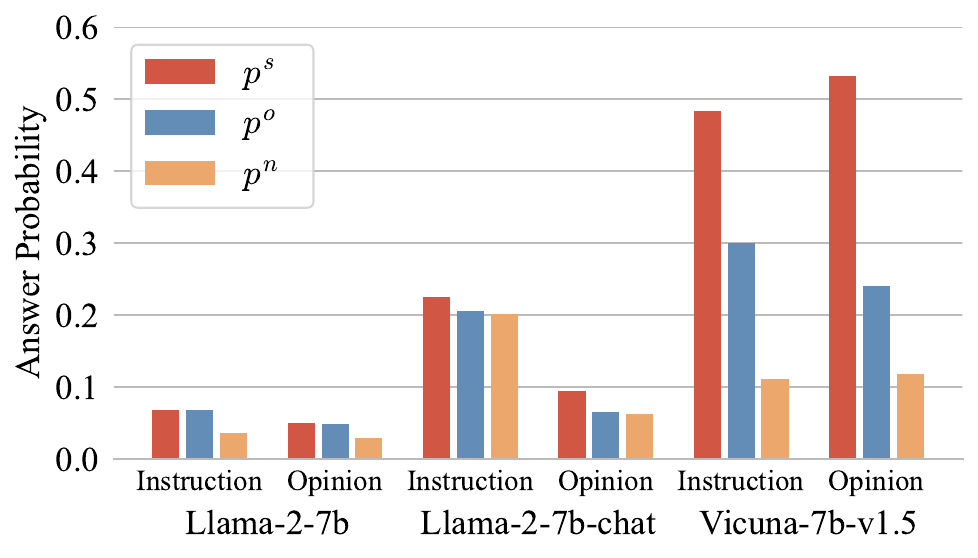}}
		\vskip -0.1in
		\caption{Average answer probabilities of the substituted answer $p^s$, original answer $p^o$ and none response $p^n$ for Llama-2-7b, Llama-2-7b-chat \cite{Touvron-2023-Llama} and Vicuna-7b-v1.5 \cite{Zheng-2023-Judging} with instruction-based and opinion-based prompting strategies. Generating tokens other than option codes is also feasible so that the sum of three probabilities is less than 1. Details of prompts are listed in \cref{app:prompts}.}
		\label{fig:tendency-analysis}
	\end{center}
	\vskip -0.2in
\end{figure}

To validate the prerequisite of our work that retrieval-augmented LLMs have tendencies of delivering response according to both the contextual and parametric knowledge, we conduct tendency analysis on multiple prevalent pre-trained LLMs. For a specific question $q$ and the counterfactual context $c^s$, we construct three candidate responses: the substituted answer $y^s$, the original answer $y^o$ and a none response $y^n$ instantiated as ``None of the above''. Then we format the question answering problem to a multi-choice task where LLM selects the best response from $y^s, y^o$ and $y^n$ given $q$ and $c^s$. We propose \emph{answer probability} as a proxy for measuring whether the model relies more on contextual or parametric knowledge, computed by
\begin{equation}
	\pi_\theta(\text{[OPTION]}|f_{\text{multi-choice}}(q, c^s, y^s, y^o, y^n)),
\end{equation}
where $\text{[OPTION]}$ is the token of option code, \textit{i.e.}, [A], [B] or [C] in out setting. We denote the probabilities of $y^s$, $y^o$ and $y^n$ corresponding to the options as $p^\text{s}$, $p^\text{o}$, $p^\text{n}$, respectively. 

We visualize the tendency of three pre-trained LLMs with two types of prompting strategies on the test set of Natural Questions \cite{Fisch-2019-MRQA} in \cref{fig:tendency-analysis}. With both the instruction-based and opinion-based prompts, Llama-2-7b and Llama-2-7b-chat give almost the same probabilities for the substituted and original answer. Vicuna-7b-v1.5 shows maximum sum of the three probabilities, indicating strongest understanding of the multi-choice task. Even so, Vicuna-7b-v1.5 struggles to decisively choose the substituted answer and still has an likelihood around 0.25 to select the original answer. Hence we draw the conclusion that \emph{the hypothesis of LLMs having capabilities of responding according to both the contextual and parametric knowledge holds true}.

\section{Trustworthy Alignment}
\label{sec:trustworthy_alignment}

The inherent capabilities of responding according to the contextual and parametric knowledge afford the opportunity to control the LLMs' intention in the scenario of knowledge conflict. In this section, we address \ref{Q2} by \emph{formulating the LLM as a Markov Decision Process (MDP), introducing \method{} and providing theoretical analysis for the proposed algorithm}.

\subsection{LLM as an MDP}
\label{sec:llm_as_an_mdp}

Typically, an MDP $\mathcal{M}\triangleq\left\langle \mathcal{S}, \mathcal{A}, r, \mathbb{P}, \rho_0, \gamma \right\rangle$ consists of the state space $\mathcal{S}$, the action space $\mathcal{A}$, a reward function $r$, the transition probability $\mathbb{P}$, the initial state distribution $\rho_0$ and a discount factor $\gamma$. The generation process of an LLM is naturally an MDP where each state $s \in \mathcal{S}$ is the context comprising prompt tokens $\tilde{a}_{0:m}$ and previously generated tokens $a_{0:t-1}$. The initial state $\tilde{s}$ is the sequence of prompt tokens $\tilde{a}_{0:m}$ sampled from the distribution $\rho_0$.\footnote{As all of the subsequent generation steps condition on the prompt tokens, we omit $\tilde{a}_{0:m}$ and denote state $s_t$ only with the generated tokens as $a_{0:t-1}$ for clarity.} Action space $\mathcal{A}$ is a finite vocabulary $\mathcal{V} = \{1, \ldots, |\mathcal{V}|\}$. Given the current state $s_t = a_{0:t-1}$, the LLM selects an action $a_{t}$ from $\mathcal{A}$, identically samples a token from the vocabulary $\mathcal{V}$ with the distribution $\pi_\theta$ in an auto-regressive manner formulated as $a_t \sim \pi_\theta(a_t|s_t) = \pi_\theta(a_t|a_{0:t-1})$. The transition in LLM is deterministic as the next state $s_{t+1}$ is definitely transformed into $a_{0:t}$ after the action $a_t$ is selected. Afterward, a deterministic reward $r_t$ is given by the reward function: $r_t = r(s_t, a_t)$. A state is regarded as terminated when a special end-of-text token is selected or the response reaches the max length. We define the state action value function $Q^{\pi_\theta}(s, a) \triangleq \mathbb{E}_{\mathbb{P}, \pi_\theta} \left[\sum^{T-1}_{t=0}\gamma^t r^t | s_0 = s, a_0 = a\right]$, which represents the expected return following policy $\pi_\theta$ after taking action $a$ at the state $s$. The state value function is defined as $V^{\pi_\theta} = \mathbb{E}_{a \sim \pi_\theta}\left[Q^{\pi_\theta}(s, a)\right]$, \emph{i.e.,} the expected return following $\pi_\theta$ from state $s$. The primary objective of optimizing the LLM via reinforcement learning is to maximize the expectation of discounted cumulative reward:
\begin{equation}
	J(\pi_\theta) = \mathbb{E}_{\tilde{s} \sim \rho_0} \mathbb{E}_{a_{0:T-1} \sim \pi_\theta} \left[\sum_{t=0}^{T-1} \gamma^t r_t \right].
\end{equation}

\subsection{Objective of Trustworthy-Alignment}
We aim at aligning the language model to the trustworthy status where it supplies responses in accordance with merely the retrieved evidence and ignores the parametric knowledge. To achieve this goal, we propose a simple yet effective reinforcement learning (RL) algorithm which rewards the model's reliance on contextual knowledge and penalizes its reliance on parametric knowledge. Specifically, given the question $q$ and substituted context $c^s$, we reward $\pi_\theta$ as long as the counterfactual answer $y^s$ appears in the terminated response and penalize it if the original answer $y^o$ appears or neither answers are included, formulated as
\begin{equation}
	\label{eqn:reward_trust}
	r_t^{\text{trust}} =
	\begin{cases}
		+\alpha_0 & \text{if } s_{t+1} \text{ ends} \wedge y^s \subseteq s_{t+1} \wedge y^o \nsubseteq s_{t+1} \\
		-\alpha_1 & \text{if } s_{t+1} \text{ ends} \wedge y^s \nsubseteq s_{t+1} \wedge y^o \nsubseteq s_{t+1} \\
		-\alpha_0 & \text{if } s_{t+1} \text{ ends} \wedge y^o \subseteq s_{t+1} \\
		0 & \text{otherwise} \\
	\end{cases},
\end{equation}
where $\alpha_0$ and $\alpha_1$ control absolute values of reward and penalty. Note that only the last generated token is rewarded so that this reward is on the sentence level.

Indispensably, a Kullback–Leibler (KL) divergence term is added to the reward following preceding works \cite{Jaques-2019-Way,Stiennon-2020-Learning} to penalize the policy's deviation from the original supervised fine-tuned model $\pi^\text{SFT}$, formulated as
\begin{equation}
	r_t^{\text{KL}} = -\alpha_2 \log \left[ \frac{\pi_\theta(a_t|s_t)}{\pi^{\text{SFT}}(a_t|s_t)} \right],
\end{equation}
where $\alpha_2$ is a coefficient controlling the training trade-off between a model performing specific task and a model more like the original one. 

Even with the regularization of KL divergence, LLMs are prone to falling into local optima in the training procedure of RL. For instance, pre-trained LLM may continually output the same repeated pattern when a short repetitive string like ``\textbackslash n\textbackslash n\textbackslash n'' shows up in the prompt, which will be reinforced and overwhelm the cumulative $r_t^{\text{trust}}$ in the subsequent training procedure by virtue of its low accumulative KL penalty. To avoid this, we propose an additional collapse penalty to persuade the LLM from collapsed mode:
\begin{equation}
	r_t^{\text{col}} = g(s_{t+1}) =
	\begin{cases}
		-\alpha_3 & \text{if } s_{t+1} \text{ ends and collapses}\\
		0 & \text{otherwise} \\
	\end{cases},
\end{equation}
where $\alpha_3$ controls the absolute value of $r_t^{\text{col}}$ and the collapse detection algorithm is described in \cref{app:implementation}.

Overall, the final reward function is formulated as
\begin{equation}
	\label{eqn:reward}
	r(s_t, a_t) = r_t^{\text{trust}} + r_t^{\text{col}} + r_t^{\text{KL}}.
\end{equation}
We train the policy with PPO algorithm \cite{Schulman-2017-Proximal}. We propose an annealing strategy to trade-off between exploration and exploitation in PPO training of LLMs, which is detailed in \cref{app:annealing_exploration} due to space constraints. The algorithm pipeline of \method{} is described in \cref{app:implementation}.

\subsection{Theoretical Analysis}
In this subsection, we theoretically analyze the proposed \method, ensuring the policy's convergence to the trustworthy status. Specifically, we demonstate that the optimal policy for the total reward $r(s_t, a_t)$ is consistent with the optimal policy for the task-specific reward $r_t^{\text{trust}}$, while $r_t^{\text{KL}}$ and $r_t^{\text{col}}$ do not affect the realization of the alignment. Actually, the purpose of designing these regular terms is to accelerate the model's learning procedure in practice \cite{Ng-1999-Policy}.

Formally, we define an original MDP as $\mathcal{M}\triangleq\left\langle \mathcal{S}, \mathcal{A}, r, \mathbb{P}, \rho_0, \gamma \right\rangle$, where $\mathcal{S}, \mathcal{A}, \mathbb{P}, \rho_0, \gamma$ share the same definitions as in \cref{sec:llm_as_an_mdp}. Both $\mathcal{S}$ and $\mathcal{A}$ are finite sets, and $r$ is defined as $r(s_t,a_t, s_{t+1}) = r^\text{trust}_t$.\footnote{Note that $r(s_t, a_t, s_{t+1}) = r(s_t, a_t)$, since $s_{t+1}$ can be uniquely determined by $s_t$ and $a_t$ in the context of language models.} A transformed MDP is defined based on the original MDP $\mathcal{M}$ as $\mathcal{M}'\triangleq\left\langle \mathcal{S}, \mathcal{A}, r', \mathbb{P}, \rho_0, \gamma \right\rangle$, which differs from $M$ in that the reward function is defined as $r'(s_t, a_t, s_{t+1}) = r^\text{trust}_t + r^\text{col}_t + r^\text{KL}_t$. The reward functions of both MDPs are bounded. We will show that the optimal ${\pi^*}'$ obtained on $\mathcal{M}'$ can be equivalent to the optimal $\pi^*$ obtained on $\mathcal{M}$. Our analysis approach is similar to the Potential-based Reward Shaping theory \cite{Ng-1999-Policy}. Firstly, we introduce the definition of the state potential function and provide a lemma.
\begin{definition}
	For any $s \in \mathcal{S}$, define its potential function as follows:
	\begin{equation}
		\Psi(s) = g(s) - \alpha_2 \log{P_{\pi_\theta}(s)} + \alpha_2 \log{P_{\pi^\text{SFT}}(s)},
	\end{equation}
\end{definition}
where $P_\pi(s) = \prod\pi_\theta(a_t|a_{0:t-1})$ indicates the probability of state $s$ generated from $\pi$. It can be known that $\Psi(s_{t + 1}) - \Psi(s_t) = r_t' - r_t = r^\text{col}_t + r^\text{KL}_t$, while the reward function $r^\text{trust}_t$ cannot be decomposed into a subtraction of some potential function, as discussed in detail in \cref{app:proofs}. Using the potential function, we can prove the following lemma about order-preserving properties.
\begin{lemma}(Order Preserving)
	\label{lemma:order}
	There exists a potential function $\Psi(\cdot)$ such that $\Psi(s_{t + 1}) - \Psi(s_t) = r_t' - r_t$. Given state $s \in \mathcal{S}$ and policy $\tilde{\pi}$, for any actions $a_1, a_2 \in \mathcal{A}$, when $\gamma = 1$, if  $Q^{\tilde{\pi}}_{\mathcal{M}'}(s, a_1) > Q^{\tilde{\pi}}_{\mathcal{M}'}(s, a_2)$, then $Q^{\tilde{\pi}}_{\mathcal{M}}(s, a_1) > Q^{\tilde{\pi}}_{\mathcal{M}}(s, a_2)$. Otherwise, assume $Q^{\tilde{\pi}}_{\mathcal{M}}$ is continuous function with respect to $\gamma$, when $\gamma$ is sufficiently close to $1$, if  $Q^{\tilde{\pi}}_{\mathcal{M}'}(s, a_1) > Q^{\tilde{\pi}}_{\mathcal{M}'}(s, a_2)$, then $Q^{\tilde{\pi}}_{\mathcal{M}}(s, a_1) > Q^{\tilde{\pi}}_{\mathcal{M}}(s, a_2)$.
\end{lemma}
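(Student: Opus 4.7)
The plan is to follow the potential-based reward shaping (PBRS) framework invoked by the authors, noting that the ``shaping'' difference $r_t'-r_t = r_t^\text{col}+r_t^\text{KL}$ here takes the undiscounted form $\Psi(s_{t+1})-\Psi(s_t)$, so that the cleanest analysis lives at $\gamma = 1$ and the general $\gamma$-regime is handled by continuity. First I would verify the potential-function identity itself. Using $P_\pi(s_{t+1}) = P_\pi(s_t)\,\pi(a_t\mid s_t)$ for both $\pi_\theta$ and $\pi^\text{SFT}$, the log-probability part of $\Psi(s_{t+1})-\Psi(s_t)$ collapses to $-\alpha_2\log[\pi_\theta(a_t\mid s_t)/\pi^\text{SFT}(a_t\mid s_t)] = r_t^\text{KL}$. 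Since $g$ vanishes off terminal states, $g(s_{t+1})-g(s_t) = g(s_{t+1}) = r_t^\text{col}$. Summing the two yields the required identity $\Psi(s_{t+1})-\Psi(s_t) = r_t' - r_t$.

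Next I would apply this identity to the Q-value gap. Writing
\begin{equation}
Q^{\tilde\pi}_{\mathcal{M}'}(s,a) - Q^{\tilde\pi}_{\mathcal{M}}(s,a) = \mathbb{E}_{\tilde\pi}\Bigl[\sum_{t=0}^{T-1} \gamma^t \bigl(\Psi(s_{t+1})-\Psi(s_t)\bigr) \,\Big|\, s_0=s,\, a_0=a\Bigr],
\end{equation}
Abel summation rewrites the inner sum as $\gamma^{T-1}\Psi(s_T) + (1-\gamma)\sum_{t=1}^{T-1}\gamma^{t-1}\Psi(s_t) - \Psi(s)$. At $\gamma = 1$ this telescopes to $\Psi(s_T)-\Psi(s)$; adopting the standard episodic-RL convention that the potential at an absorbing terminal state is zero, the gap reduces to $-\Psi(s)$, a constant in $a$. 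Hence the pairwise differences $Q^{\tilde\pi}_{\mathcal{M}'}(s,a_1) - Q^{\tilde\pi}_{\mathcal{M}'}(s,a_2)$ and $Q^{\tilde\pi}_{\mathcal{M}}(s,a_1) - Q^{\tilde\pi}_{\mathcal{M}}(s,a_2)$ coincide, so strict inequalities are preserved exactly. For $\gamma$ merely close to $1$, I would combine the continuity hypothesis on $Q^{\tilde\pi}_{\mathcal{M}}(s,\cdot)$ with the analogous continuity of $Q^{\tilde\pi}_{\mathcal{M}'}(s,\cdot)$ in $\gamma$ (finite horizon, bounded rewards): since any strict inequality among finitely many jointly continuous functions persists under sufficiently small perturbations of $\gamma$, the $\gamma = 1$ ordering transfers to a neighborhood.

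The main obstacle will be the Abel-summation tail. Because $\Psi$ contains $\log P_{\pi_\theta}(s_T)$, whose magnitude grows with trajectory length, forcing both $\gamma^{T-1}\Psi(s_T)$ and the $(1-\gamma)$ correction to vanish uniformly in the initial action $a$ relies on the bounded-horizon cap on generations together with the KL regularization that keeps $\pi_\theta$ close to $\pi^\text{SFT}$. Pinning down the terminal-state convention, so that $\Psi$ on absorbing states may legitimately be taken to be zero without violating the telescoping identity already established, is the subtle bookkeeping step on which the entire order-preservation conclusion rests; without it, the inequality would survive only up to an action-dependent residual that must then be dominated by the assumed strict gap in $Q^{\tilde\pi}_{\mathcal{M}'}$.
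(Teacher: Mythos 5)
Your proposal takes essentially the same route as the paper: you verify the same potential identity $\Psi(s_{t+1})-\Psi(s_t)=r'_t-r_t$, show the $Q$-gaps of $\mathcal{M}$ and $\mathcal{M}'$ coincide exactly at $\gamma=1$, and pass to $\gamma$ near $1$ by continuity, with your Abel-summation remainder $(1-\gamma)\sum_{t}\gamma^{t-1}\Psi(s_t)$ being precisely the paper's correction term $(\gamma-1)\,q^{\tilde{\pi}}(s,a)$ obtained from unrolling the Bellman recursion. The terminal-state caveat you flag (that $\Psi(s_T)$ as defined need not vanish, so the residual $\mathbb{E}[\Psi(s_T)\mid s,a]$ could in principle depend on $a$) is real but is equally present and equally unaddressed in the paper's own proof, whose $\gamma=1$ identity $Q^{\tilde{\pi}}_{\mathcal{M}'}(s,a)=Q^{\tilde{\pi}}_{\mathcal{M}}(s,a)-\Psi(s)$ implicitly assumes the same zero-potential boundary condition at absorbing states.
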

The proof can be found in \cref{app:proofs}. This lemma indicates that when $\gamma$ is large enough, for the same policy, the relative numerical relationship of the Q-function of the transformed MDP $\mathcal{M}'$ is consistent with the original MDP $\mathcal{M}$. Therefore, we can obtain the following consistency and convergence theorem.

\begin{theorem} (Consistency and Convergence)
	Assume $Q^{\tilde{\pi}}_{\mathcal{M}}$ is a continuous function with respect to $\gamma$, when $\gamma$ is sufficiently close to $1$, the optimal policy obtained in MDP $\mathcal{M}'$  is also the optimal policy in MDP $\mathcal{M}$. The policy $\pi'$ obtained from the training of the RL algorithm has the following error bounds with the optimal policy, where $\pi^*$ is the optimal policy on $\mathcal{M}$, $\epsilon = \left\|  \mathcal{T}Q^{\pi'}_{\mathcal{M}} - Q^{\pi'}_{\mathcal{M}} \right\|_\infty$, $\mathcal{T}$ is the Bellman Optimal Operator:
	\begin{equation}
		\left\|  V^{\pi'}_{\mathcal{M}} - V^{\pi^*}_{\mathcal{M}} \right\|_\infty \leq \frac{2\gamma\epsilon}{1 - \gamma}.
	\end{equation}
\end{theorem}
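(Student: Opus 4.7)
The plan is to derive both halves of the theorem from Lemma~\ref{lemma:order} and the $\gamma$-contraction property of the Bellman optimality operator $\mathcal{T}$ on $\mathcal{M}$. Consistency follows because the Bellman optimality equation characterizes optimal policies as those that act greedily with respect to their own $Q$-functions, and the order-preserving lemma transports that greedy property from $\mathcal{M}'$ to $\mathcal{M}$ whenever $\gamma$ is close enough to $1$. Once the optimal policy $\pi^*$ is unambiguously defined on both MDPs, the error bound reduces to a textbook approximate-dynamic-programming calculation based on the Bellman residual $\epsilon$.

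First I would prove consistency. Let $\pi'^{*}$ be any optimal policy on $\mathcal{M}'$. The Bellman optimality equation on $\mathcal{M}'$ implies that for every state $s$ the support of $\pi'^{*}(\cdot\mid s)$ lies in $\arg\max_a Q^{\pi'^{*}}_{\mathcal{M}'}(s,a)$. Applying Lemma~\ref{lemma:order} with $\tilde\pi=\pi'^{*}$ to every pair of actions at every state, and using the finiteness of $\mathcal{S}$ and $\mathcal{A}$ together with the continuity of $Q^{\tilde\pi}_\mathcal{M}$ in $\gamma$, I take the minimum of finitely many thresholds to obtain a single $\gamma_0<1$ such that for all $\gamma\in(\gamma_0,1)$ every strict ordering in $Q^{\pi'^{*}}_{\mathcal{M}'}(s,\cdot)$ is inherited by $Q^{\pi'^{*}}_\mathcal{M}(s,\cdot)$. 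Hence $\pi'^{*}$ also satisfies the Bellman optimality equation on $\mathcal{M}$, so it is optimal on $\mathcal{M}$ and the $\pi^{*}$ in the theorem statement is well defined.

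For the error bound, I exploit that $\mathcal{T}$ is a $\gamma$-contraction in $\|\cdot\|_\infty$ with fixed point $Q^{\pi^{*}}_\mathcal{M}$. The triangle inequality
\[
\|Q^{\pi^{*}}_\mathcal{M}-Q^{\pi'}_\mathcal{M}\|_\infty\le\|\mathcal{T}Q^{\pi^{*}}_\mathcal{M}-\mathcal{T}Q^{\pi'}_\mathcal{M}\|_\infty+\|\mathcal{T}Q^{\pi'}_\mathcal{M}-Q^{\pi'}_\mathcal{M}\|_\infty
\]
combined with contraction gives $\|Q^{\pi^{*}}_\mathcal{M}-Q^{\pi'}_\mathcal{M}\|_\infty\le\epsilon/(1-\gamma)$. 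A standard policy-loss lemma from approximate dynamic programming then turns $\epsilon$-closeness of $Q^{\pi'}_\mathcal{M}$ to its Bellman image into the stated bound $\|V^{\pi'}_\mathcal{M}-V^{\pi^{*}}_\mathcal{M}\|_\infty\le 2\gamma\epsilon/(1-\gamma)$; the extra factor of $\gamma$ arises because a single Bellman step already contracts the residual once before it propagates along the trajectory induced by $\pi'$.

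The hard part will be justifying the $\gamma\to 1$ limit used in Lemma~\ref{lemma:order}. The identity $\Psi(s_{t+1})-\Psi(s_t)=r'_t-r_t$ telescopes cleanly only at $\gamma=1$, since canonical potential-based shaping uses the weight $\gamma\Psi(s_{t+1})-\Psi(s_t)$. The proof must therefore extract a uniform threshold $\gamma_0$ that simultaneously preserves all finitely many strict pairwise dominances inherited from $\gamma=1$; continuity of $Q^{\tilde\pi}_\mathcal{M}$ in $\gamma$ and finiteness of $\mathcal{S}\times\mathcal{A}$ make this possible, but this is the only non-routine step in the argument, and it is also where any tie-breaking subtleties at the optimal policy must be handled.
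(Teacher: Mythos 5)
Your proposal is correct in substance and rests on the same pivot as the paper --- the Order-Preserving Lemma --- but it packages both halves differently. For consistency, the paper runs policy iteration from a common initialization in $\mathcal{M}$ and $\mathcal{M}'$ and argues by induction that every greedy step selects the same action in both MDPs (explicitly assuming common tie-breaking), whereas you characterize optimality via the fixed-point property that an optimal policy is greedy with respect to its own $Q$-function and transport that greediness from $\mathcal{M}'$ to $\mathcal{M}$ in one shot; your route avoids the induction, but it inherits the same two caveats the paper waves away, namely that the lemma only transfers \emph{strict} inequalities (so ties in $Q^{\tilde\pi}_{\mathcal{M}'}(s,\cdot)$ are not guaranteed to survive as weak inequalities in $Q^{\tilde\pi}_{\mathcal{M}}(s,\cdot)$ for $\gamma<1$) and that the uniform threshold $\gamma_0$ must be taken over the finitely many deterministic policies as well as over state--action pairs, since the optimal policy of $\mathcal{M}'$ may itself vary with $\gamma$ --- you flag both issues, which is more careful than the paper. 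For the error bound, the paper simply cites Theorem~4.1 of \citet{Williams-1993-Tight}, while you sketch a self-contained contraction argument; your triangle-inequality step is sound (the first term equals $\|Q^{\pi^*}_{\mathcal{M}}-\mathcal{T}Q^{\pi'}_{\mathcal{M}}\|_\infty$ because $Q^{\pi^*}_{\mathcal{M}}$ is the fixed point of $\mathcal{T}$) and yields $\|Q^{\pi^*}_{\mathcal{M}}-Q^{\pi'}_{\mathcal{M}}\|_\infty\le\epsilon/(1-\gamma)$, which already implies the stated bound whenever $\gamma\ge 1/2$; the remaining appeal to a ``standard policy-loss lemma'' is the only hand-wave, and making it precise amounts to reproving the cited result. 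Net effect: your version is more self-contained and exposes the hypotheses the paper leaves implicit, at the cost of some unproved but standard ADP machinery.
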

The proof is detailed in \cref{app:proofs}. According to the theorem, we guarantee that LLMs trained with \method{} converge to the trustworthy status.

\begin{table*}[ht]
	\vskip -0.08in
	\centering
	\caption{Performance comparison between \method{} (Ours), supervised fine-tuning (SFT) and two prompting methods (Instruction and Opinion). Open-source models are trained on Natural Questions and evaluated on all three datasets. $\uparrow$ indicates that higher is better, while $\downarrow$ means lower is better. Results of best method for each model are highlighted in bold.}
	\vskip 0.15in
	\centering
	\small
	\newcolumntype{C}{>{\centering\arraybackslash}p{1.66cm}}
	\begin{tabular}{clCCCCCC}
		\toprule
		\multirow{2.6}{*}{Model} & \multirow{2.6}{*}{Method} & \multicolumn{2}{c}{\textsc{Natural Questions}} & \multicolumn{2}{c}{\textsc{ConflictQA-PopQA}} & \multicolumn{2}{c}{\textsc{ConflictQA-StrategyQA}} \\
		\cmidrule(lr){3-4} \cmidrule(lr){5-6} \cmidrule(lr){7-8} 
		& & EM (\%) $\uparrow$ & $M_R$ (\%) $\downarrow$ & Acc (\%) $\uparrow$ & $M_R$ (\%) $\downarrow$ & Acc (\%) $\uparrow$ & $M_R$ (\%) $\downarrow$ \\
		
		\midrule
		\multirow{2}{*}{ChatGPT} & Instruction & 47.73 & 45.23 & 85.61 & 9.09 & 84.29 & 21.06 \\
		& Opinion & 76.34 & 16.41 & 88.41 & 6.92 & 85.25 & 20.33 \\
		
		\midrule
		\multirow{2}{*}{GPT-4} & Instruction & 77.41 & 76.17 & 75.35 & 19.40 & 72.86 & 34.63 \\
		& Opinion & 89.71 & 84.14 & 86.19 & 11.43 & 83.63 & 20.65 \\
		
		\midrule
		\multirow{4}{*}{Llama-2-7b} & Instruction & 10.16 & 83.41 & 71.43 & 27.10 & 59.96 & 41.11 \\
		& Opinion & 20.32 & 66.97 & 68.67 & 26.82 & 43.22 & 50.98 \\
		& SFT & \textbf{92.34} & \textbf{1.58} & 76.19 & \textbf{13.30} & 67.38 & 39.76 \\
		& Ours & 82.93 & 1.65 & \textbf{81.19} & 18.20 & \textbf{69.17} & \textbf{37.32} \\
		
		\midrule
		\multirow{4}{*}{Vicuna-7b-v1.5} & Instruction & 48.80 & 39.57 & 79.15 & 10.25 & 66.96 & 29.44 \\
		& Opinion & 50.80 & 31.31 & 79.16 & \textbf{8.56} & 69.76 & 25.90 \\
		& SFT & \textbf{93.14} & \textbf{1.09} & \textbf{90.74} & 9.54 & 71.73 & 29.97 \\
		& Ours & 91.22 & 2.00 & 66.14 & 12.24 & \textbf{77.06} & \textbf{24.19} \\
		
		\midrule
		\multirow{4}{*}{Llama-2-7b-chat} & Instruction & 46.00 & 40.00 & 74.06 & 16.76 & 73.53 & 35.19 \\
		& Opinion & 62.17 & 24.54 & 74.31 & \textbf{10.59} & \textbf{75.07} & \textbf{31.73} \\
		& SFT & 92.42 & 1.24 & \textbf{85.31} & 15.40 & 73.70 & 38.35 \\
		& Ours & \textbf{94.92} & \textbf{0.70} & 66.14 & 12.24 & 64.87 & 31.79 \\
		
		\bottomrule
	\end{tabular}
	\label{tab:main_results}
	\vskip -0.15in
\end{table*}

\section{Experiments}
\label{sec:experiments}

In this section, we conduct experiments and analyze \method{} to answer \ref{Q3} \emph{how does the proposed algorithm perform and what are the side effects of aligning retrieval-augmented LLMs to trustworthy status}. We first detail the datasets, metrics and compared baselines. Then we analyze the performance, generalization capability, alignment tax and human evaluation of \method. Finally, we investigate the impact of individual components of \method{} by carrying out ablation studies on annealing exploration, scaling law and critic model.

\subsection{Setup}

\subsubsection{Datasets}

Following the setting of prior works \cite{Longpre-2021-EntityBased, Chen-2022-Rich}, we take the open-domain question answering task as a showcase of knowledge conflicts in the retrieval-augmented language model generation. We train \method{} on the MRQA Workshop Shared Task's version\footnote{There is a significant amount of duplicate data in the training set. We deduplicate the dataset for unbiased training.} \cite{Fisch-2019-MRQA} of \textbf{Natural Questions} \cite{Kwiatkowski-2019-Natural}. To be specific, we create the counterfactual version of Natural Questions by replacing the answer and all of its mentions in the context with another same type entity from the same dataset, following the substitution framework in \citet{Longpre-2021-EntityBased}. The original and substituted version of training set is used for training while only substituted version of dev set is used for evaluation.

Despite that the substituting framework is straightforward, \citet{Xie-2023-Adaptive} argues that evidence synthesized by simply substituting entities in the context is too incoherent to convince LLMs. And thus they construct a more coherent and convincing dataset \textbf{ConflictQA} on top of an entity-centric QA dataset \textbf{PopQA} \cite{Mallen-2023-When} and a multi-step reasoning dataset \textbf{StrategyQA} \cite{Geva-2021-Did} by prompting LLMs to directly generate supporting evidence that contradicts the parametric memory. We evaluate the generalization capability of \method{} by testing the performance of LLMs trained on Natural Questions on ConflictQA generated from GPT-4.

Besides, to evaluate the alignment tax, we test performance change of the aligned model on \textbf{Open LLM Leaderboard}\footnote{\url{https://huggingface.co/spaces/HuggingFaceH4/open_llm_leaderboard}} tasks, including grade-school science \textbf{AI2 Reasoning Challenge} \cite{Clark-2018-Think}, commonsense inference \textbf{HellaSwag} \cite{Zellers-2019-HellaSwag}, multitask language understanding \textbf{MMLU} \cite{Hendrycks-2021-Measuring}, truthful generation \textbf{TruthfulQA} \cite{Lin-2022-TruthfulQA}, commonsense reasoning \textbf{Winogrande} \cite{Sakaguchi-2021-WinoGrande} and multi-step mathematical reasoning \textbf{GSM8k} \cite{Cobbe-2021-Training}.

\subsubsection{Metrics}
We evaluate the response of LLM given the question and substituted evidence. For free-form QA dataset Natural Questions, we report the average \emph{best subspan exact match} (EM) indicating if the answer entity appears in the response. Samples in ConflictQA is transformed into multiple-choice QA format following \citet{Xie-2023-Adaptive} and thus we directly assess the \emph{accuracy} (Acc) for this dataset.

Additionally, we adopt memorization ratio \cite{Longpre-2021-EntityBased} $M_R = \frac{p_o}{p_o + p_s}$ to measure the LLM's capability of forgetting parametric knowledge, where $p_o$ and $p_s$ are the fractions of samples that model predicts the original answer and substituted answer, respectively. Note that $p_s$ is equivalent to average EM and Acc and an LLM is deemed as in trustworthy status if owning a small memorization ratio. Following \citet{Longpre-2021-EntityBased}, we narrow the evaluation on Natural Questions by only preserving samples that LLM can correctly respond the original answer in a closed-book setting, meaning that the knowledge is originally stored in its parameters. Nevertheless, we do not perform this filter on ConflictQA as multi-choice QA task is profoundly affected by the order of options \cite{Wang-2018-Position,Ko-2020-Look,Wang-2023-PandaLM,Zheng-2023-Judging}, which renders the filtering unreliable.

\subsubsection{Baselines}
\begin{itemize}
	\itemsep-0.4em 
	\item \textbf{Closed-source Models.} We experiment with two closed-source LLMs, ChatGPT (gpt-3.5-turbo-1106) and GPT-4 (gpt-4-1106-preview)\footnote{Experiments on the GPT series models are conducted in January, 2024.}, to present the state-of-the art performance under the zero-shot setting.
	\item \textbf{Open-source Models.} We conduct experiments on Llama-2-7b and its fine-tuned version Llama-2-7b-chat that has been optimized for dialogue use cases \cite{Touvron-2023-Llama}. Besides, we also evaluate Vicuna-7b-v1.5 \cite{Zheng-2023-Judging} which is fine-tuned from Llama-2-7b with ChatGPT conversation data.
	\item \textbf{Prompt Engineering.} We evaluate the zero-shot performance of these closed-source and open-source LLMs with prompt engineering. An instruction-based and an opinion-based prompting strategy are adopted for all models. All prompts are listed in \cref{app:prompts}.
	\item \textbf{Supervised Fine-tuning.} We perform supervised fine-tuning on open-source LLMs, which directly teaches the LLM how to respond by minimizing the cross entropy between the ground truth answer and output of LLM. Prompt template is fixed to instruction-based form in the supervised fine-tuning and \method.
\end{itemize}

\subsection{Analyses}
\label{sec:analysis}

\begin{figure*}[ht]
	\vskip 0.2in
	\begin{center}
		\centerline{\includegraphics[width=\textwidth]{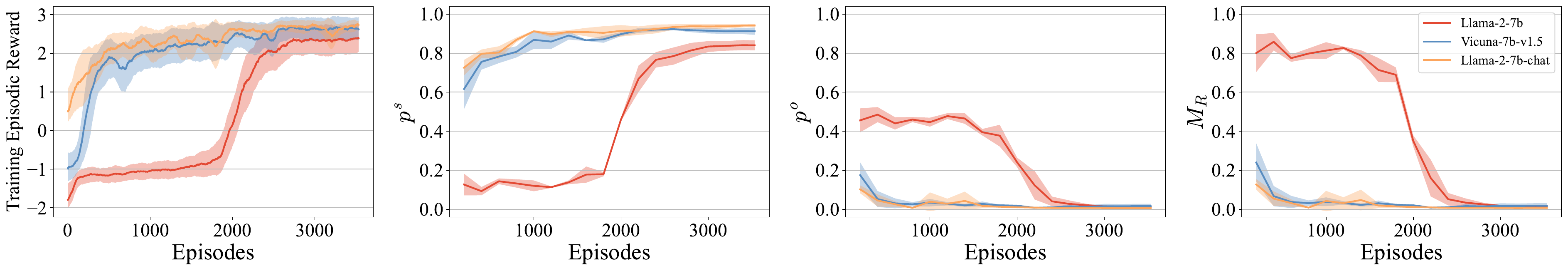}}
		\caption{Curves of training episodic reward, $p^s$, $p^o$ and $M_R$ on held-out evaluation dataset in the training procedure of \method{} on Natural Questions. The solid curves represent the mean value and the shaded areas correspond to the standard deviations of 3 runs with different seeds. The curve of episodic reward is smoothed by averaging within a sliding window. For a clear presentation of task performance, the KL penalty is not included in the curve of episodic reward but visualized alone in \cref{app:supp_results}.}
		\label{fig:training_curves}
	\end{center}
	\vskip -0.2in
\end{figure*}

\begin{table*}[ht]
	\centering
	\caption{The performance change of LLMs after \method{} (Ours) and supervised fine-tuning (SFT) on Open LLM Leaderboard tasks. Performance discrepancies between fine-tuned and pre-trained models are enclosed in parentheses.}
	\vskip 0.15in
	\centering
	\small
	\newcolumntype{C}{>{\centering\arraybackslash}p{1.78cm}}
	\begin{tabular}{clCCCCCC}
		\toprule
		Model & Method & ARC $\uparrow$ & HellaSwag $\uparrow$ & MMLU $\uparrow$ & TruthfulQA $\uparrow$ & Winogrande $\uparrow$ & GSM8k $\uparrow$ \\
		
		\midrule
		\multirow{3}{*}{Llama-2-7b} & Base & 46.25 & 76.01 & 41.77 & 38.96 & 69.06 & 13.19 \\
		& SFT & 47.95 (+1.70) & 76.40 (+0.39) & 41.82 (+0.05) & 43.53 (+4.57) & 69.30 (+0.24) & 10.31 (-2.88) \\
		& Ours & 45.90 (-0.35) & 75.60 (-0.41) & 39.26 (-2.51) & 36.95 (-2.01) & 68.59 (-0.47) & 11.68 (-1.51) \\
		
		\midrule
		\multirow{3}{*}{Vicuna-7b-v1.5} & Base & 45.73 & 73.76 & 48.75 & 50.37 & 69.46 & 19.18 \\
		& SFT & 44.88 (-0.85) & 74.62 (+0.86) & 46.84 (-1.91) & 48.82 (-1.55) & 68.43 (-1.03) & 12.96 (-6.22) \\
		& Ours & 45.73 (+0.00) & 72.94 (-0.82) & 48.62 (-0.13) & 50.00 (-0.37) & 69.61 (+0.15) & 18.42 (-0.76) \\
		
		\midrule
		\multirow{3}{*}{Llama-2-7b-chat} & Base & 44.37 & 75.45 & 45.34 & 44.59 & 66.22 & 21.68 \\
		& SFT & 47.70 (+3.33) & 75.02 (-0.43) & 46.17 (+0.83) & 45.43 (+0.84) & 68.03 (+1.81) & 14.10 (-7.58) \\
		& Ours & 47.87 (+3.50) & 75.39 (-0.06) & 48.33 (+2.99) & 45.17 (+0.58) & 65.82 (-0.40) & 17.13 (-4.55) \\
		
		\bottomrule
	\end{tabular}
	\label{tab:alignment_tax}
	\vskip -0.1in
\end{table*}

\paragraph{The proposed alignment algorithm significantly improves the trustworthiness of retrieval-augmented LLMs.}
We present the performance of \method{} and compared baselines on Natural Questions in \cref{tab:main_results}. From the results, we observe that with training-free prompt engineering approach, GPT-4 serves as the most effective model with respect to EM. The hand-crafted opinion-based prompt shows better EM and lower $M_R$ than instruction-based prompt on almost all of these model. The opinion-based prompt formulates the contextual information as somebody's opinion and thus LLM believes what is provided and ignores what is true in its parametric knowledge. However, GPT-4 memorizes more often than other models. We analyze the output of GPT-4, finding that it always presents not only facts in the opinion but also knowledge stored in its parameters, and thus shows highest memorization ratio. Examples are shown in \cref{app:examples}.

With \method{} and supervised fine-tuning, all of three open-source models achieves significantly higher EM and lower $M_R$. Notably, even without explicit supervision on how to respond, \method{} achieves comparable performance to supervised fine-tuning and even surpasses it when using the stronger model Llama-2-7b-chat. Performance of \method{} on Llama-2-7b and Vicuna-7b-v1.5 is inferior to supervised fine-tuning due to weaker instruction-following capability of base models, which will be discussed in the following subsection.

\paragraph{Inherent capacity and adequate exploration are indispensable in aligning LLMs to trustworthy status.}
To delve more profoundly into the training process of \method, we visualize curves of training episodic reward, as well as $p^s$, $p^o$ and $M_R$ on held-out evaluation dataset. As shown in \cref{fig:training_curves}, Llama-2-7b-chat and Vicuna-7b-v1.5 enjoy stable training where the training episodic reward and $p^s$ climb up while $p^o$ and $M_R$ decline gradually. Llama-2-7b behaves differently from the other two models. In the early training phase, Llama-2-7b is the weakest model and holds the lowest training episodic reward. Continued poor performance persists due to a lack of exploration in high-reward areas. As the LLM gradually reaches the area where substituted answers appear, \textit{i.e.}, near 2000 episodes in \cref{fig:training_curves}, the performance displays a steep climb to a status with high training episodic reward and low evaluation $M_R$. This phenomenon emphasizes the demand of inherent trustworthy status in pre-trained LLMs and exploration in the training process for \method. Generally, the performance of all three models converges as the training carrying on, demonstrating the stability of the \method{} algorithm with inherent trustworthy status in pre-trained LLMs.

\paragraph{The trustworthy aligned model generalizes well to out-of-domain datasets.}
We evaluate the LLMs aligned with Natural Questions data on two additional datasets ConflictQA-PopQA and ConflictQA-StrategyQA, whose questions and contexts have never been seen and even the task is transformed from free-form QA to multi-choice QA. As shown in \cref{tab:main_results}, \method{} improves the accuracy of Llama-2-7b and Vicuna-7b-v1.5 on ConflictQA-PopQA and ConflictQA-StrategyQA over prompt engineering approaches. Meanwhile, \method{} surpasses SFT with respect to the metric $M_R$ by a large margin for the Llama-2-7b and Vicuna-7b-v1.5 model on ConflictQA-StrategyQA, indicating its more significant suppressing effect on parametric memory. This verifies the favorable generalization capability of \method{} to out-of-domain datasets.

\paragraph{Aligning LLMs via RL preserves language styles of original pre-trained models.} A by-product of aligning LLMs via RL is that language styles of original pre-trained models are preserved, while SFT changes them totally. We showcase responses of LLMs aligned via \method{} and SFT in \cref{app:examples}. As the results show, the Llama-2-13b-chat aligned with \method{} gives answers with complete sentences. By contrast, Llama-2-13b-chat aligned with SFT provides barely answers, even without periods. This is a reasonable phenomenon as the demonstration data used in RL is actually collected from the LLM in the training process, which keeps the same language style with the pre-trained model. The data used in SFT is labeled by human whose style distribution may be distinct from the original model. This phenomenon is also an evidence that the trustworthy status is an inherent ability of LLMs rather than assigned by external supervision data. Additionally, we provide human evaluation results on language style and fluency of \method{} in \cref{sec:human_evaluation} to further demonstrates that aligning via RL indeed helps.

\paragraph{Aligning LLMs to trustworthy status via RL enjoys preferential alignment tax treatment.}

\citet{Ouyang-2022-Training} declare that training LLM with PPO suffers from ``alignment tax'', \textit{i.e.}, model's performance drops on several public NLP datasets after aligning the model. We evaluate trustworthy aligned LLMs on the Open LLM Leaderboard task set. As the results in \cref{tab:alignment_tax} show, the performance of aligned Vicuna-7b-v1.5 on these six datasets drops less than 1\% while the supervised fine-tuning reduces the performance more. Unexpectedly, \method{} enhances the Llama-2-7b-chat's performance on some of these out-of-domain datasets, like ARC, MMLU and TruthfulQA. A possible explanation is that, \method{} encourages Llama-2-7b-chat to concentrate more on the information in prompt, which is helpful for these improved datasets. We also find that whether it is \method{} or supervised fine-tuning, both decline noticeably on GSM8k, though \method{} surpasses supervised fine-tuning with all of three base models. This may be due to the limited error tolerance of mathematical reasoning task and thus more future works are needed to further mitigate the alignment tax. 

\begin{figure}[t]
	\vskip 0.2in
	\begin{center}
		\centerline{\includegraphics[width=0.7\columnwidth]{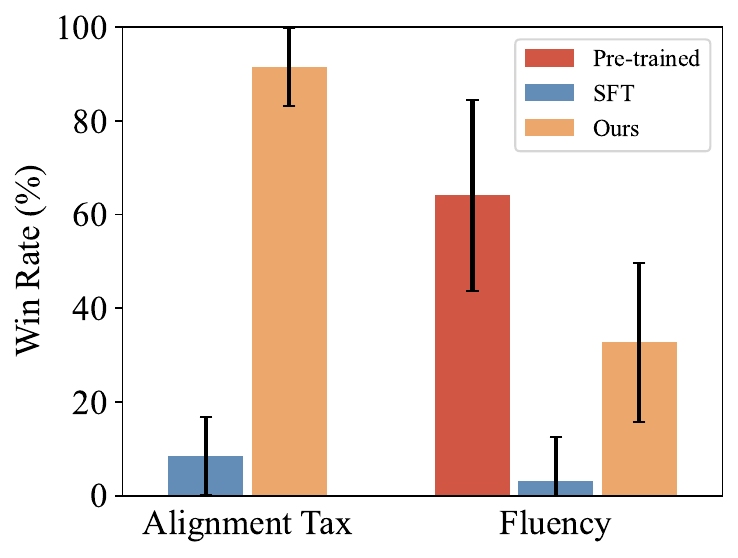}}
		\vskip -0.1in
		\caption{Win rate of \method{} (Ours) and supervised fine-tuning (SFT) in human evaluation for alignment tax and impact on fluency. We present the mean win rate along with error bars that represent the standard deviations.}
		\label{fig:human-evaluation}
	\end{center}
	\vskip -0.2in
\end{figure}

\subsection{Human Evaluation}
\label{sec:human_evaluation}

Despite that we do not intend to align LLMs with human preference via the proposed algorithm, human evaluation is an indispensable evaluation protocol for language generation. Thus, we conduct human evaluations for alignment tax and fluency of \method{}. Specifically, we randomly select question-context pairs from the test set of Natural Questions, generate responses using pre-trained, supervised-finetuned and trustworthy aligned Llama-2-7b-chat models for each pair. Human evaluators are asked to judge whether the response from supervised-finetuned or trustworthy aligned model is closer to the pre-trained model in terms of language style. Naturally, more similar language style with pre-trained model indicates lower alignment tax. Besides, we ask human evaluators to select their most preferred response out of these three models for measurement of fluency.

We collect feedback on 40 question-context pairs from 33 human evaluators, and compute the mean values and standard deviations of win rate for each model. According to the results shown in \cref{fig:human-evaluation}, the responses of \method{} are significantly more analogous to the pre-trained model and more fluent than SFT model. These results are expected since the RL algorithm in \method{} reinforces some of the intrinsic outputs of pre-trained LLM, while SFT forces the pre-trained LLM to follow the style of provided demonstration data.

\subsection{Ablation Studies}

\begin{table}[t]
	\centering
	\caption{Results of ablation studies on the annealing exploration strategy, scaling law and critic model. All experiments are conducted on Natural Questions dataset.}
	\vskip 0.15in
	\centering
	\small
	\begin{tabular}{lcc}
		\toprule
		Variant & EM (\%) $\uparrow$ & $M_R$ (\%) $\downarrow$ \\
		\midrule
		wo. Annealing & 96.12 & 1.10 \\
		\midrule
		1.3B & 24.04 & 14.29 \\
		13B & 94.39 & 1.27 \\
		\midrule
		Random Critic & 93.58 & 0.71 \\
		Larger Critic & 93.32 & 1.85 \\
		\midrule
		Origin & 94.92 & 0.70 \\
		\bottomrule
	\end{tabular}
	\label{tab:ablation_study}
	\vskip -0.1in
\end{table}

We carry out ablation studies on the annealing exploration strategy, scaling law and critic model in \method{} to verify their impacts. Based on an aligned Llama-2-7b-chat with chat-opt-350m-reward\footnote{\url{https://huggingface.co/AdamG012/chat-opt-350m-reward-deepspeed}.} as pre-trained critic model and annealing exploration strategy (\textbf{Origin}), we consider the following variants:
\begin{itemize}
	\itemsep-0.4em 
	\item We disable the annealing exploration strategy by training with a fixed temperature 1.0 (\textbf{wo. Annealing}).
	\item We investigate the scaling law of \method{} by training with different actor model size, including a 1.3B model chat-opt-1.3b\footnote{\url{https://huggingface.co/AdamG012/chat-opt-1.3b-sft-deepspeed}.} (\textbf{1.3B}) and a 13B model Llama-2-13b-chat (\textbf{13B})
	\item We replace critic model in \method{} with a random initialized chat-opt-350m-reward (\textbf{Random Critic}) and a pre-trained Llama-2-7b-chat without head layer (\textbf{Larger Critic}) 
\end{itemize}

As shown in \cref{tab:ablation_study}, ``wo. Annealing'' obtains higher EM and $M_R$ than baseline, which indicates a much higher $p^o$. These results are consistent with the expectation that exploration plays a crucial role in the whole aligning procedure discussed in \cref{sec:analysis}. Without large temperature in the early stage of annealing exploration, LLMs generate less experience excluding the parametric knowledge, and thus is less penalized for memorizing.

Likewise, 1.3B model exhibits a much inferior performance resulting from lack of exploration on output including the contextual knowledge. The larger 13B model does not show markedly better performance than the original 7B model. We speculate that 7B model is enough for approaching the upper limit of our tasks so that the verification of scaling law for larger models may requires stronger benchmarks.

Results of ``Random Critic'' and ``Larger Critic'' indicate that neither the pre-training nor scaling up critic models in alignment via reinforcement learning exerts substantial influence on the performance, which is consistent with one of the conclusions in \citet{Li-2023-ReMax}.

\section{Related Work}

\subsection{Hallucination}

Hallucination in LLMs is defined as generating content that is nonsensical or contradict to the source input and typically divided into extrinsic and intrinsic hallucinations \cite{Ji-2023-Survey}. An LLM may present responses that cannot be verified from the source or generated content \cite{Maynez-2020-Faithfulness}, which is referred to as extrinsic hallucination. Retrieval-Augmented Generation \cite{Lewis-2020-RetrievalAugmented} attempts to tackle this type of hallucination by storing knowledge in an extra knowledge base, retrieve and relying on related knowledge during generation. Despite that verifiability is significantly enhanced, the conflict between parametric and contextual knowledge \cite{Longpre-2021-EntityBased} triggers the intrinsic hallucination where LLMs generate outputs inconsistent with retrieved information \cite{Mallen-2023-When}. \citet{Longpre-2021-EntityBased} and \citet{Neeman-2023-DisentQA} construct counterfactual data and utilize supervised fine-tuning to compel LLMs to focus on the retrieved context. \citet{Chen-2022-Rich} expand the study of knowledge conflicts to multiple evidence setting and proposes a recalibration method encouraging LLMs to abstain from answering questions in conflict. Prompt engineering is also proved to be effective for improving faithfulness in knowledge conflict situations \cite{Zhou-2023-Contextfaithful,Si-2023-PROMPTING}. These dedicated efforts imply inherent capabilities of responding according to parametric and contextual knowledge in LLMs. Based on this, we take the first step towards encouraging LLMs to explore trustworthy status with reinforcement learning where only what to achieve is provided.

\subsection{Alignment}

Alignment was first introduced to create agents that behave in line with the intention of users \cite{Leike-2018-Scalable} and boomed in the era of LLMs due to the strong demand for AI assistants harmony with human values \cite{Stiennon-2020-Learning,Ouyang-2022-Training}. Alignment is achieved by reinforcement learning from human feedback (RLHF) \cite{Christiano-2017-Deep} that trains a reward model representing human preference and optimize an instruction-tuned model \cite{Wei-2022-FINETUNED} using the reward model. Beyond that, researchers extend the RLHF to reinforcement learning from AI feedback (RLAIF) \cite{Bai-2022-Constitutional} and self-alignment \cite{Sun-2023-PrincipleDriven} that learn with feedback generated from language models to reduce the dependence of human annotations. Moreover, safe RL framework \cite{Garcia-2015-Comprehensive} is integrated into alignment to mitigate the conflict between helpfulness and harmless of LLMs. A grand vision of aligning future superhuman models with weak models \cite{Burns-2023-WeaktoStrong} confirms the long-term value of alignment technology in the development of AGI. Different from existing works that mainly focus on aligning agents with human intention and values, we expand the application scenarios of alignment to creating trustworthy agents.

\section{Conclusion}

In this work, we take the first step towards expanding the application scenarios of alignment technology from fulfilling human intentions to creating trustworthy agents that overcome knowledge conflicts. Specifically, we elaborate our work by verifying the prerequisite hypothesis, design a reinforcement learning algorithm and comprehensively investigate the performance and side effects of the proposed algorithm. We theoretically and experimentally demonstrate LLMs' capability of reaching a trustworthy status without explicit supervision on how to respond. We hope our work will provide insight into future research in the theory and applications of alignment technology.

\section*{Acknowledgements}

This work is supported by National Key R\&D Program of China under Contract 2022ZD0119802, National Natural Science Foundation of China under Contract 623B2097 and the Youth Innovation Promotion Association CAS. It was supported by GPU cluster built by MCC Lab of Information Science and Technology Institution, USTC, and the Supercomputing Center of the USTC. This work was also supported by Ant Group Research Fund.

\section*{Impact Statement}

This paper presents work whose goal is to advance the field of 
Machine Learning. There are many potential societal consequences 
of our work, none which we feel must be specifically highlighted here.


\bibliography{strings, example_paper}
\bibliographystyle{icml2024}

\newpage
\appendix
\onecolumn
\section{Proofs}
\label{app:proofs}
\begin{definition}
	\label{app:def}
	For any $s \in \mathcal{S}$, define its potential function as follows, where $P_\pi(s)$ indicates the probability of generating state $s$ using $\pi$:
	\begin{equation}
		\Psi(s) = g(s) - \alpha_2 \log{P_{\pi_\theta}(s)} + \alpha_2 \log{P_{\pi_\text{SFT}}(s)}.
	\end{equation}
\end{definition}
When $s_{t+1}$ is a collapsed string, it is clear that $s_t$ cannot be a collapsed string, so there is $g(s_{t+1}) - g(s_t) = g(s_{t+1}) - 0 = r^\text{col}_t$. Otherwise, neither $s_{t+1}$ nor $s_t$ is a collapsed string, so there is $g(s_{t+1}) - g(s_t) = 0 - 0 = r^\text{col}_t$. According to \cref{app:def}, we have:
\begin{equation}
	\label{eqn:potential}
	\begin{aligned}
		&\Psi(s_{t+1}) - \Psi(s_{t}) = g(s_{t+1}) - g(s_t) - \alpha_2 \left(\log{P_{\pi_\theta}(s_{t+1})} - \log{P_{\pi_\theta}(s_{t})} - \left( \log{P_{\pi_\text{SFT}}(s_{t+1})} - \log{P_{\pi_\text{SFT}}(s_{t})} \right) \right)\\
		&= r^\text{col}_t - \alpha_2 \left( \log{\prod^{t}_{k=0}\pi_\theta(a_k|a_{0:k-1})} - \log{\prod^{t-1}_{k=0}\pi_\theta(a_k|a_{0:k-1})} - \left( \log{\prod^{t}_{k=0}\pi_\text{SFT}(a_k|a_{0:k-1})} - \log{\prod^{t-1}_{k=0}\pi_\text{SFT}(a_k|a_{0:k-1})} \right) \right) \\
		&= r^\text{col}_t - \alpha_2 \left( \sum^{t}_{k=0}{\log{\pi_\theta(a_k|a_{0:k-1})}} - \sum^{t-1}_{k=0}{\log{\pi_\theta(a_k|a_{0:k-1})}} - \left( \sum^{t}_{k=0}{\log{\pi_\text{SFT}(a_k|a_{0:k-1})}} - \sum^{t-1}_{k=0}{\log{\pi_\text{SFT}(a_k|a_{0:k-1})}} \right) \right)\\
		&=  r^\text{col}_t - \alpha_2 \left( \log{\pi_\theta(a_t|a_{0:t-1})} - \log{\pi_\text{SFT}(a_t|a_{0:t-1})} \right) = r^\text{col}_t - \alpha_2 \left( \log{\pi_\theta(a_t|s_t)} - \log{\pi_\text{SFT}(a_t|s_t)} \right) \\
		&= r^\text{col}_t - \alpha_2\log{\frac{\pi_\theta(a_t|s_t)}{\pi_\text{SFT}(a_t|s_t)}} = r^\text{col}_t + r^\text{KL}_t = r'(s_t, a_t, s_{t + 1}) - r(s_t, a_t, s_{t + 1}).
	\end{aligned}
\end{equation}
Equation (\ref{eqn:potential}) shows that the rewards $r^\text{col}_t$ and $r^\text{KL}_t$ can be expressed as the difference between two neighboring state potential functions. Note that $r^\text{trust}_t$ cannot be expressed as the difference between the potential functions of two neighboring states, since $r^\text{trust}_t$ depends not only on the state $s_{t+1}$, but also on $y^s$ and $y^o$. $y^s$ and $y^o$ can, however, be viewed as feedback following an MDP input action (i.e., an MDP input action $a_t$ outputs $S_{t+1},y^s,y^o,r^\text{trust}_t$). However, this feedback is not given directly, but rather in an indirect form as a reward $r^\text{trust}_t$. Since $r^\text{trust}_t$ cannot be expressed as a difference of potential functions, based on to Potential-based Reward Shaping theory \cite{Ng-1999-Policy}, it is known that that there exists a preferred optimal policy for the original MDP $M$ (the preference policy guided by rewards in MDP $M$ is the policy that produces trustworthy responses) and that there is no case where any policy without any preference is optimal.

\begin{lemma}(Order Preserving)
	\label{app:lemma:order}
	There exists a potential function $\Psi(\cdot)$ such that $\Psi(s_{t + 1}) - \Psi(s_t) = r_t' - r_t$. Given state $s \in \mathcal{S}$ and policy $\tilde{\pi}$, for any actions $a_1, a_2 \in \mathcal{A}$, when $\gamma = 1$, if  $Q^{\tilde{\pi}}_{\mathcal{M}'}(s, a_1) > Q^{\tilde{\pi}}_{\mathcal{M}'}(s, a_2)$, then $Q^{\tilde{\pi}}_{\mathcal{M}}(s, a_1) > Q^{\tilde{\pi}}_{\mathcal{M}}(s, a_2)$. Otherwise, assume $Q^{\tilde{\pi}}_{\mathcal{M}}$ is continuous function with respect to $\gamma$, when $\gamma$ is sufficiently close to $1$, if  $Q^{\tilde{\pi}}_{\mathcal{M}'}(s, a_1) > Q^{\tilde{\pi}}_{\mathcal{M}'}(s, a_2)$, then $Q^{\tilde{\pi}}_{\mathcal{M}}(s, a_1) > Q^{\tilde{\pi}}_{\mathcal{M}}(s, a_2)$.
\end{lemma}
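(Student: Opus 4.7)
The plan is a telescoping argument in the spirit of Ng, Harada, and Russell, combined with a continuity argument to extend the $\gamma = 1$ conclusion to $\gamma$ near $1$. The existence of $\Psi$ satisfying $\Psi(s_{t+1}) - \Psi(s_t) = r'_t - r_t$ is already given by the display equation immediately preceding the lemma, so the first assertion of the lemma requires no additional work; I would simply reuse the $\Psi$ constructed there (evaluated at $\pi_\theta = \tilde{\pi}$ to match the policy under consideration).

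For the case $\gamma = 1$, I fix $\tilde{\pi}$ and expand $Q^{\tilde{\pi}}_{\mathcal{M}'}(s, a)$ as the expected undiscounted sum of $r'_t = r_t + (\Psi(s_{t+1}) - \Psi(s_t))$ conditioned on $s_0 = s$, $a_0 = a$. Linearity of expectation splits this into $Q^{\tilde{\pi}}_{\mathcal{M}}(s, a)$ plus the expectation of the telescoped residual $\Psi(s_T) - \Psi(s)$. Adopting the standard episodic-MDP convention that absorbing terminal states carry zero potential, this residual collapses to $-\Psi(s)$, a constant in $a$. Consequently $Q^{\tilde{\pi}}_{\mathcal{M}'}(s, \cdot)$ and $Q^{\tilde{\pi}}_{\mathcal{M}}(s, \cdot)$ differ by an $a$-independent offset, and any strict inequality in one transfers verbatim to the other.

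For the case $\gamma$ near $1$, I invoke the continuity assumption of the lemma (with $Q^{\tilde{\pi}}_{\mathcal{M}'}$ inheriting continuity in $\gamma$ from boundedness of $r'$). Since $\mathcal{S}$ and $\mathcal{A}$ are finite, only finitely many triples $(s, a_1, a_2)$ need be considered. For each triple whose $\gamma = 1$ Q-value difference is nonzero, both the $\mathcal{M}'$-difference and the $\mathcal{M}$-difference retain the same nonzero sign on a uniform left neighborhood of $\gamma = 1$ (their $\gamma = 1$ values coincide by the previous step). The lemma's hypothesis $Q^{\tilde{\pi}}_{\mathcal{M}'}(s, a_1) > Q^{\tilde{\pi}}_{\mathcal{M}'}(s, a_2)$ then rules out the negative branch and forces $Q^{\tilde{\pi}}_{\mathcal{M}}(s, a_1) > Q^{\tilde{\pi}}_{\mathcal{M}}(s, a_2)$.

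The main obstacle I anticipate is the terminal-state accounting: because the shaping term used here is $\Psi(s') - \Psi(s)$ rather than the $\gamma\Psi(s') - \Psi(s)$ form that gives exact Q-value invariance for every $\gamma$, the telescoping closes cleanly only at $\gamma = 1$, which is precisely why the general case must be routed through a continuity argument rather than an exact identity. A secondary nuisance is the boundary situation where both $\gamma = 1$ differences vanish; continuity alone cannot then pin down the sign near $\gamma = 1$, but the strict hypothesis in the lemma together with finiteness of the action space suffices to channel every relevant triple into the unambiguous strictly positive branch.
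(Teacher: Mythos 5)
Your proposal is correct and follows essentially the same route as the paper: both establish that at $\gamma=1$ the shaped and unshaped Q-functions differ by the action-independent offset $\Psi(s)$ (you via telescoping the return, the paper via subtracting $\Psi(s)$ from the Bellman expectation equation), and both handle $\gamma$ near $1$ by a continuity/limit argument, the paper making the correction term $(\gamma-1)\bigl(q^{\tilde{\pi}}(s,a_1)-q^{\tilde{\pi}}(s,a_2)\bigr)$ explicit before passing to the limit. Your explicit attention to the terminal-state potential is a point the paper leaves implicit, but it does not constitute a different approach.
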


\begin{proof}
	According to the Bellman expectation equation of $Q^{\tilde{\pi}}_{\mathcal{M}}(s, a)$, we get the following equation by subtracting $\Psi(s)$ from both sides simultaneously (Note that $s'$ in the following equation is uniquely determined by $s$ and $a$):
	\begin{equation}
		\begin{aligned}
			\label{eqn:diff}
			Q^{\tilde{\pi}}_{\mathcal{M}}(s, a) - \Psi(s) &= r(s, a, s') + \gamma \mathbb{E}_{a' \sim \tilde{\pi}} \left[ Q^{\tilde{\pi}}_{\mathcal{M}}(s', a') \right] - \Psi(s) \\
			&= r(s, a, s') + \Psi(s') - \Psi(s) + \gamma \mathbb{E}_{a' \sim \tilde{\pi}} \left[ Q^{\tilde{\pi}}_{\mathcal{M}}(s', a') - \Psi(s') \right] + (\gamma - 1)\Psi(s') \\
			&= r'(s, a, s') + \gamma \mathbb{E}_{a' \sim \tilde{\pi}} \left[ Q^{\tilde{\pi}}_{\mathcal{M}}(s', a') - \Psi(s') \right] + (\gamma - 1)\Psi(s').
		\end{aligned}
	\end{equation}
	The Bellman Expectation equation for $ Q^{\tilde{\pi}}_{\mathcal{M}'}(s, a)$ is as follows:
	\begin{equation}
		\label{eqn:bellman_M'}
		Q^{\tilde{\pi}}_{\mathcal{M}'}(s, a) = r'(s, a, s') + \gamma \mathbb{E}_{a' \sim \tilde{\pi}} \left[ Q^{\tilde{\pi}}_{\mathcal{M}'}(s', a') \right].
	\end{equation}
	$1^\circ$ \,\, When $\gamma = 1$, \eqref{eqn:diff} can be written in the following form:
	\begin{equation}
		\label{eqn:bellman_M}
		Q^{\tilde{\pi}}_{\mathcal{M}}(s, a) - \Psi(s) = r'(s, a, s') + \gamma \mathbb{E}_{a' \sim \tilde{\pi}} \left[ Q^{\tilde{\pi}}_{\mathcal{M}}(s', a') - \Psi(s') \right].
	\end{equation}
	Compare \eqref{eqn:bellman_M'} and \eqref{eqn:bellman_M}, according to Potential-based Reward Shaping theory \cite{Ng-1999-Policy}, it can be known that:
	\begin{equation}
		Q^{\tilde{\pi}}_{\mathcal{M}'}(s, a) = Q^{\tilde{\pi}}_{\mathcal{M}}(s, a) - \Psi(s).
	\end{equation}
	When $Q^{\tilde{\pi}}_{\mathcal{M}'}(s, a_1) > Q^{\tilde{\pi}}_{\mathcal{M}'}(s, a_2)$, then there is:
	\begin{equation}
		\begin{aligned}
			Q^{\tilde{\pi}}_{\mathcal{M}}(s, a_1) - Q^{\tilde{\pi}}_{\mathcal{M}}(s, a_2) &= \left(Q^{\tilde{\pi}}_{\mathcal{M}'}(s, a_1) + \Psi(s)\right) - \left(Q^{\tilde{\pi}}_{\mathcal{M}'}(s, a_2) + \Psi(s)\right) \\
			&= Q^{\tilde{\pi}}_{\mathcal{M}'}(s, a_1) - Q^{\tilde{\pi}}_{\mathcal{M}'}(s, a_2) > 0.
		\end{aligned}
	\end{equation}
	So there is $Q^{\tilde{\pi}}_{\mathcal{M}}(s, a_1) > Q^{\tilde{\pi}}_{\mathcal{M}}(s, a_2)$.
	
	$2^\circ$ \,\, According to \eqref{eqn:diff} and \eqref{eqn:bellman_M'},we can split the $Q^{\tilde{\pi}}_{\mathcal{M}}(s, a) - \Psi(s)$ into the following form:
	\begin{equation}
		\begin{aligned}
			\label{eqn:19}
			Q^{\tilde{\pi}}_{\mathcal{M}}(s, a) - \Psi(s) &= \mathbb{E}_{\mathbb{P},\tilde{\pi}}\left[\left.\sum^\infty_{t=0}{\gamma^t \left(r'(s_t,a_t,s_{t+1}) + (\gamma - 1)\Psi(s_{t + 1})\right)}\right|s_0=s, a_0=a\right] \\
			&= \mathbb{E}_{\mathbb{P},\tilde{\pi}}\left[\left.\sum^\infty_{t=0}{\gamma^t r'(s_t,a_t,s_{t+1})}\right|s_0=s, a_0=a\right] + (\gamma - 1)\mathbb{E}_{\mathbb{P},\tilde{\pi}}\left[\left.\sum^\infty_{t=0}{\gamma^t\Psi(s_{t + 1})}\right|s_0=s, a_0=a\right] \\
			&= Q^{\tilde{\pi}}_{\mathcal{M}'}(s, a) + (\gamma - 1)\mathbb{E}_{\mathbb{P},\tilde{\pi}}\left[\left.\sum^\infty_{t=0}{\gamma^t\Psi(s_{t + 1})}\right|s_0=s, a_0=a\right].
		\end{aligned}
	\end{equation}
	Let $q^{\tilde{\pi}}(s, a) = \mathbb{E}_{\mathbb{P},\tilde{\pi}}\left[\left.\sum^\infty_{t=0}{\gamma^t\Psi(s_{t + 1})}\right|s_0=s, a_0=a\right] $, the following equation can be obtained:
	\begin{equation}
		\begin{aligned}
			\label{eqn:comp}
			Q^{\tilde{\pi}}_{\mathcal{M}}(s, a_1) - Q^{\tilde{\pi}}_{\mathcal{M}}(s, a_2) &= \left(Q^{\tilde{\pi}}_{\mathcal{M}}(s, a_1) - \Psi(s)\right) - \left(Q^{\tilde{\pi}}_{\mathcal{M}}(s, a_2) - \Psi(s)\right) \\
			&= Q^{\tilde{\pi}}_{\mathcal{M}'}(s, a_1) - Q^{\tilde{\pi}}_{\mathcal{M}'}(s, a_2) + (\gamma - 1) \left( q^{\tilde{\pi}}(s, a_1) - q^{\tilde{\pi}}(s, a_2) \right).
		\end{aligned}
	\end{equation}
	Since $Q^{\tilde{\pi}}_{\mathcal{M}}$ is continuous with respect to $\gamma$, then $Q^{\tilde{\pi}}_{\mathcal{M}}(s, a_1) - Q^{\tilde{\pi}}_{\mathcal{M}}(s, a_2)$ is also continuous with respect to $\gamma$. When $Q^{\tilde{\pi}}_{\mathcal{M}'}(s, a_1) > Q^{\tilde{\pi}}_{\mathcal{M}'}(s, a_2)$, taking the limit on both sides of \eqref{eqn:comp} has:
	\begin{equation}
		\lim_{\gamma \rightarrow 1^-} Q^{\tilde{\pi}}_{\mathcal{M}}(s, a_1) - Q^{\tilde{\pi}}_{\mathcal{M}}(s, a_2) = Q^{\tilde{\pi}}_{\mathcal{M}'}(s, a_1) - Q^{\tilde{\pi}}_{\mathcal{M}'}(s, a_2) > 0.
	\end{equation}
	By the order-preserving nature of the limit it follows that there exists $\delta > 0$ such that when $\gamma > 1 - \delta$, there is $Q^{\tilde{\pi}}_{\mathcal{M}}(s, a_1) > Q^{\tilde{\pi}}_{\mathcal{M}}(s, a_2)$
\end{proof}

\begin{theorem} (Consistency and Covergence)
	Assume $Q^{\tilde{\pi}}_{\mathcal{M}}$ is continuous function with respect to $\gamma$, when $\gamma$ is sufficiently close to $1$, the optimal policy obtained in MDP $\mathcal{M}'$  is also the optimal policy in MDP $\mathcal{M}$. The policy $\pi'$ obtained from the training of the RL algorithm has the following error bounds with the optimal policy, where $\pi^*$ is the optimal policy on $\mathcal{M}$, $\epsilon = \left\|  \mathcal{T}Q^{\pi'}_{\mathcal{M}} - Q^{\pi'}_{\mathcal{M}} \right\|_\infty$, $\mathcal{T}$ is the Bellman Optimal Operator:
	\begin{equation}
		\left\|  V^{\pi'}_{\mathcal{M}} - V^{\pi^*}_{\mathcal{M}} \right\|_\infty \leq \frac{2\gamma\epsilon}{1 - \gamma}. 
	\end{equation}
\end{theorem}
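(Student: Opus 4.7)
The theorem has two claims, and I would dispatch them in turn. The consistency half is essentially a direct corollary of Lemma~\ref{lemma:order}. Letting $\pi^{*'}$ denote an optimal policy on $\mathcal{M}'$, the Bellman optimality principle on $\mathcal{M}'$ says that for every state $s \in \mathcal{S}$ the action $\pi^{*'}(s)$ attains the maximum of $Q^{\pi^{*'}}_{\mathcal{M}'}(s, \cdot)$. Instantiating the lemma with $\tilde{\pi} = \pi^{*'}$, $a_1 = \pi^{*'}(s)$, and $a_2$ ranging over the remaining actions, I obtain that $\pi^{*'}(s)$ also maximizes $Q^{\pi^{*'}}_{\mathcal{M}}(s, \cdot)$ whenever $\gamma$ is sufficiently close to $1$. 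Because $\mathcal{S}$ and $\mathcal{A}$ are finite, a single $\gamma$ works simultaneously across all states and competing actions. Hence $\pi^{*'}$ is greedy with respect to its own Q-function on $\mathcal{M}$, $Q^{\pi^{*'}}_{\mathcal{M}}$ is a fixed point of the Bellman optimality operator $\mathcal{T}$, and $\pi^{*'}$ is therefore optimal on $\mathcal{M}$.

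For the error bound I would run the classical approximate value iteration argument. From $\epsilon = \left\| \mathcal{T} Q^{\pi'}_{\mathcal{M}} - Q^{\pi'}_{\mathcal{M}} \right\|_\infty$, the triangle inequality combined with the $\gamma$-contraction of $\mathcal{T}$ in the sup norm yields
\[
\left\| Q^{\pi'}_{\mathcal{M}} - Q^{\pi^*}_{\mathcal{M}} \right\|_\infty \leq \left\| Q^{\pi'}_{\mathcal{M}} - \mathcal{T} Q^{\pi'}_{\mathcal{M}} \right\|_\infty + \left\| \mathcal{T} Q^{\pi'}_{\mathcal{M}} - \mathcal{T} Q^{\pi^*}_{\mathcal{M}} \right\|_\infty \leq \epsilon + \gamma \left\| Q^{\pi'}_{\mathcal{M}} - Q^{\pi^*}_{\mathcal{M}} \right\|_\infty,
\]
so $\left\| Q^{\pi'}_{\mathcal{M}} - Q^{\pi^*}_{\mathcal{M}} \right\|_\infty \leq \epsilon / (1 - \gamma)$. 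To pass from this Q-function bound to a value-function bound, I would decompose $V^{\pi^*}_{\mathcal{M}}(s) - V^{\pi'}_{\mathcal{M}}(s) = Q^{\pi^*}_{\mathcal{M}}(s, \pi^*(s)) - Q^{\pi'}_{\mathcal{M}}(s, \pi'(s))$ by inserting $\pm Q^{\pi'}_{\mathcal{M}}(s, \pi^*(s))$: the first piece is bounded directly by the Q-function gap, and the second piece, in which the Q-function is the same but the action is changed, is bounded via the Bellman residual that witnesses $\pi'$ being nearly greedy with respect to $Q^{\pi'}_{\mathcal{M}}$. Absorbing one Bellman backup into the contraction before invoking the residual produces the extra $\gamma$ factor in the numerator, and summing the two pieces gives the advertised constant $2\gamma\epsilon / (1 - \gamma)$.

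The consistency half is essentially mechanical once Lemma~\ref{lemma:order} is invoked; the main obstacle is the error bound. The delicate point there is keeping the tight factor $\gamma$ in the numerator rather than settling for the looser $2\epsilon / (1 - \gamma)$: this requires applying a single Bellman backup before invoking the contraction so that the initial residual is multiplied by $\gamma$ on its way to the value-function comparison, rather than being passed through naively.
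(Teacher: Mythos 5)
Your argument reaches the right conclusions but differs from the paper's proof on both halves. For the consistency claim, the paper couples two runs of policy iteration: starting from the same initial policy in $\mathcal{M}$ and $\mathcal{M}'$, it argues by induction that the order-preserving lemma forces the greedy improvement step to select the same action in both MDPs at every iteration, so the two runs converge to the same optimal policy. You instead argue directly that an optimal policy of $\mathcal{M}'$, being greedy with respect to its own Q-function on $\mathcal{M}'$, is by the lemma also greedy with respect to its own Q-function on $\mathcal{M}$, hence satisfies the Bellman optimality equation there and is optimal by uniqueness of the fixed point for $\gamma<1$. Your route is cleaner — it avoids the induction and the implicit need for the lemma's $\gamma$-threshold to hold for every intermediate policy — but it inherits the same tie-breaking gap as the paper's: the lemma preserves only strict inequalities, and equation (\ref{eqn:comp}) shows a tie in $\mathcal{M}'$ can break either way in $\mathcal{M}$; the paper dispenses with this by explicit assumption, and you should at least flag it. For the error bound, the paper simply cites Theorem 4.1 of Williams and Baird (1993), whereas you sketch a self-contained derivation. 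Your contraction step $\left\|Q^{\pi'}_{\mathcal{M}}-Q^{\pi^*}_{\mathcal{M}}\right\|_\infty\le\epsilon/(1-\gamma)$ is correct, but the second half of your sketch is the loose point: executing your decomposition literally, the term $Q^{\pi'}_{\mathcal{M}}(s,\pi^*(s))-Q^{\pi'}_{\mathcal{M}}(s,\pi'(s))$ is controlled by $\max_a Q^{\pi'}_{\mathcal{M}}(s,a)-V^{\pi'}_{\mathcal{M}}(s)\le\epsilon/\gamma$ (the Bellman optimality residual sits behind a factor of $\gamma$ at the predecessor state), giving a total of $\epsilon/(1-\gamma)+\epsilon/\gamma$. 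This implies the advertised $2\gamma\epsilon/(1-\gamma)$ only when $\gamma\ge 1/\sqrt{2}$ — adequate in the $\gamma\to 1$ regime the theorem addresses, but your claim that ``absorbing one Bellman backup'' directly yields the constant $2\gamma/(1-\gamma)$ is not substantiated as written and would need the sharper argument of the cited reference to hold for all $\gamma$.
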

\begin{proof}
	The same initialization policy is used to optimize the policy in each of the two MDPs using the Policy Improvement Theorem \cite{Sutton-2018-Reinforcement}, and the optimal policies are obtained after continuous policy iterations. The action with the largest Q-function value will be selected in each state in the first iteration, and according to the Order Preserving Lemma, it is known that the action with the largest value is the same in both MDPs (if more than one action have the largest value, we assume that the same action will be selected in both MDPs). So after the first iteration, the same policy will still be obtained in both MDPs. By mathematical induction, we can see that after the same number of iterations, we will still get the same policy. Therefore, after iterating the same number of times to converge to the optimal policy, there will be the same optimal policy in MDP $M$ and MDP $M'$.
	
	By Theorem 4.1 in \citet{Williams-1993-Tight}, it follows that $\left\|  V^{\pi'}_{\mathcal{M}} - V^{\pi^*}_{\mathcal{M}} \right\|_\infty \leq \frac{2\gamma\epsilon}{1 - \gamma}$.
\end{proof}

\section{Annealing Exploration}
\label{app:annealing_exploration}

The trade-off between exploration and exploitation is a crucial challenge in RL \cite{Sutton-2018-Reinforcement} lying also in \method. Exploration is necessary to try out new responses that may hit the correct answer of a given question. Exploitation ensures that the policy select the best option based on currently explored experiences to maximize the long-term benefits.

\method{} trains the LLM in an on-policy manner where experience sampling is conducted using the latest version of its updating parametric policy. The amount of randomness in the response depends on the generation configuration of the current policy, especially the temperature $\tau$. Large $\tau$ enhances randomness by spreading the action probability distribution while small $\tau$ brings more certainty by sticking the most possible action out. Thus, analogous to the decreasing exploration rate in the $\epsilon$-greedy policy \cite{Mnih-2015-Humanlevel}, we propose an \emph{annealing exploration} strategy where the temperature for experience generation starts from a large value and gradually decreases, to control the trade-off between exploration and exploitation. Specifically, the temperature in training step $i$ is defined as
\begin{equation}
	\tau(i) = (1 - \frac{i}{N}) \tau^{\text{initial}} + \frac{i}{N} \tau^{\text{final}}, 
\end{equation}
where $\tau^{\text{initial}} > \tau^{\text{final}}$, and $\tau$ anneals linearly from $\tau^{\text{initial}}$ to $\tau^{\text{final}}$ over $N$ total training steps.

\section{Implementation Details and Hyperparameters}
\label{app:implementation}

\begin{algorithm2e}[ht]
	\SetKwInOut{Input}{Input}\SetKwInOut{Output}{Output}\SetKwInOut{Initialize}{Initialize}
	\SetKwRepeat{Do}{do}{while}
	
	\Input{counterfactual dataset $\sD = \{ (q, c^s, y^s, y^o) \}$, supervised fine-tuned policy $\pi^{\text{SFT}}$, max token number $T$, temperature scheduler $\tau(\cdot)$}
	\Initialize{policy network $\pi_\theta \leftarrow \pi^{\text{SFT}}$, value network $Q_\phi$}
	
	\Repeat{convergence}{
		$i \leftarrow 0$\\
		sample a batch of data $\sD_i$ from $\sD$\\
		\For{each $(q, c^s, y^s, y^o)$ in $\sD_i$}{
			$t \leftarrow 0$\\
			\Repeat{$a_t$ is end-of-text token {\bfseries or} $t = T$}{
				sample $a_t \sim \pi_\theta(a_t|q, c^s, a_{<t})$ with $\tau(i)$\\
				compute $r_t$ with \cref{eqn:reward}\\
				$t \leftarrow t + 1$\\
			}
		}
		\For{each gradient step}{
			PPO update on $\theta$ and $\phi$
		}
		$i \leftarrow i + 1$
	}
	\caption{\method{}}
	\label{alg:main}
\end{algorithm2e}

\begin{algorithm2e}
	\SetKwFunction{isCollapsed}{isCollapsed}
	\SetKwProg{Fn}{Function}{:}{}
	\Fn{\isCollapsed{$s$, $T$}}{
		\KwResult{Whether or not string $s$ is collapsed under repetition tolerance $T$}
		\For{each substring $s_{i:j}$ in $s$}{
			\For{$k$ in $1:\frac{j-i}{2}$}{
				\If{$(j-i) \bmod k = 0$}{
					candidate repetition mode $r \leftarrow s_{i:i+k}$\\
					candidate period $t \leftarrow \frac{j-i}{k}$\\
					\If{$t \geq T \wedge s_{i:j} = r * t$}{
						\Return \textbf{True}
					}
				}
			}
		}
		\Return \textbf{False}
	}
	\caption{Collapse Detection}
	\label{alg:collapse_detection}
\end{algorithm2e}

We implement \method{} based on the DeepSpeed-Chat framework \cite{Yao-2023-DeepSpeedChat} with LoRA \cite{Hu-2021-LoRA}. Experiments involving Llama-2-13b-chat or larger critic models are conducted on 8$\times$A100-80G GPUs. Other experiments are conducted on 8$\times$3090 GPUs. We use the Adam optimizer. The gradient checkpointing is enabled and we use ZeRO stage 2 for trainable models. The pipeline of \method{} is depicted in \cref{alg:main}. The collapse detection algorithm is described in \cref{alg:collapse_detection}.

We list hyperparameters in \cref{tab:hyperparameters}. All hyperparameters remain the same in the alignment of all pre-trained models.

\begin{table}[H]
	\renewcommand{\arraystretch}{1.1}
	\centering
	\caption{Hyperparameters in \method.}
	\label{tab:hyperparameters}
	\vskip 0.15in
	\begin{tabular}{l l| l }
		\toprule
		\multicolumn{2}{l|}{Parameter} &  Value\\
		\midrule
		\multicolumn{2}{l|}{\it{Training}}& \\
		& $\alpha_0$ & 3\\
		& $\alpha_1$ & 1\\
		& $\alpha_2$ & 0.05\\
		& $\alpha_3$ & 2\\
		& $\gamma$ & 1.0\\
		& $\lambda$ & 0.95\\
		& optimizer & Adam\\
		& training batch size & 8\\
		& max prompt sequence length & 1024\\
		& actor learning rate & $9.65 \times 10^{-6}$\\
		& critic learning rate & $5 \times 10^{-6}$\\
		& actor weight decay & $0.1$ \\
		& critic weight decay & $0.1$ \\
		& training epochs & 1\\
		& learning rate scheduler & cosine \\
		& warmup steps & 100 \\
		& actor lora dim & 64\\
		& actor lora learning rate & $5 \times 10^{-4}$\\
		& critic lora dim & 64\\
		& critic lora learning rate & $5 \times 10^{-4}$\\
		\midrule
		\multicolumn{2}{l|}{\it{Generation in training}}& \\
		& max new token number & 64 \\
		& do sample & True\\
		& top-p & 1.0\\
		& number of beams & 1\\
		& repetition penalty & 1.2\\
		& initial temperature $\tau^{\text{initial}}$ & 2.0 \\
		& final temperature $\tau^{\text{final}}$ & 0.0 \\
		& temperature decay type & linear \\
		\midrule
		\multicolumn{2}{l|}{\it{Generation in evaluation}}& \\
		& max new token number & 64 \\
		& do sample & False\\
		& number of beams & 1\\
		& repetition penalty & 1.2\\
		\bottomrule
	\end{tabular}
	\vskip -0.1in
\end{table}

\newpage
\section{Supplementary Experimental Results}

\subsection{Visualization of other statistics in the training process}
\begin{figure*}[ht]
	\vskip 0.2in
	\begin{center}
		\centerline{\includegraphics[width=\textwidth]{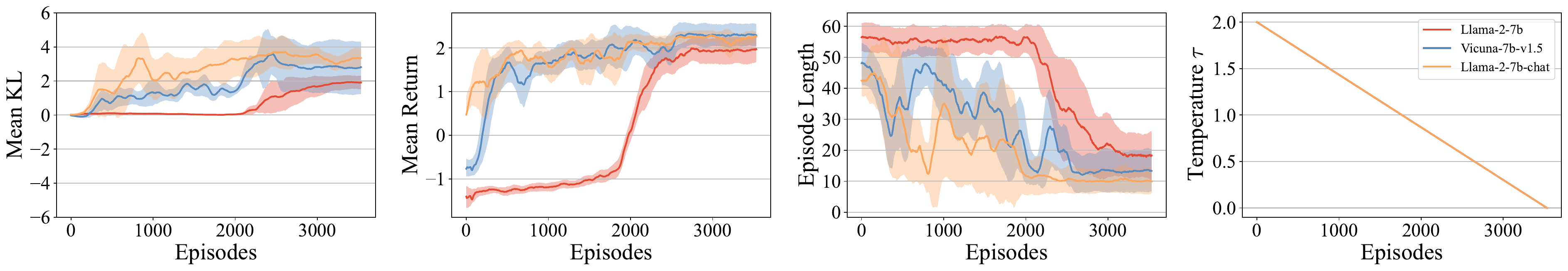}}
		\caption{Curves of token mean KL, token mean return, episode length and $\tau$ in the training procedure of \method{} on Natural Questions. The solid curves represent the mean value and the shaded areas correspond to the standard deviations of 3 runs with different seeds. The curve of token mean KL, token mean return and episode length are smoothed by averaging within a sliding window.}
		\label{fig:training_curves_rl_info}
	\end{center}
	\vskip -0.2in
\end{figure*}

\label{app:supp_results}
We visualize the curves of token mean KL, token mean return, episode length and $\tau$ in \cref{fig:training_curves_rl_info} for a deeper understanding of the training process.

\subsection{Alignment to the parametric knowledge}

\begin{table*}[ht]
	\centering
	\caption{The performance of Llama-2-7b-chat aligned to its parametric knowledge. Ours (original) and Ours (modified) use $r_t^{\text{trust}}$ in \cref{eqn:reward_trust} and \cref{eqn:reward_trust_both}, respectively.}
	\vskip 0.15in
	\centering
	\small
	\newcolumntype{C}{>{\centering\arraybackslash}p{2.3cm}}
	\begin{tabular}{lCCCC}
			\toprule
			Model & $p^s$ (\%) $\uparrow$ & $p^o$ (\%) $\uparrow$ & $p^{both}$ (\%) $\uparrow$ & $M_R$ (\%) $\uparrow$ \\
			\midrule
			Pre-trained & 46.00 & 29.68 & 1.47 & 40.00 \\
			Ours (original) & 94.92 & 0.67 & 0.45 & 0.70 \\ 
			Ours (modified) & 40.37 & 51.74 & 7.62 & 61.23 \\
			\bottomrule
		\end{tabular}
		\label{tab:align_both}
	\vskip -0.1in
\end{table*}

An assumption for \method{} is that the retriever is good enough to return contexts containing right answers for given questions. However, this assumption does not always hold true. To provide a potential solution for trade-off between parametric and contextual knowledge, we conduct experiment on probing parametric knowledge of LLMs via \method. Specifically, we modify the reward in \cref{eqn:reward_trust} to
\begin{equation}
	\label{eqn:reward_trust_both}
	r_t^{\text{trust}} =
	\begin{cases}
		+\alpha_0 & \text{if } s_{t+1} \text{ ends} \wedge (y^s \subseteq s_{t+1} \vee y^o \subseteq s_{t+1}) \\
		-\alpha_1 & \text{if } s_{t+1} \text{ ends} \wedge y^s \nsubseteq s_{t+1} \wedge y^o \nsubseteq s_{t+1} \\
		0 & \text{otherwise} \\
	\end{cases},
\end{equation}
with which the model is also rewarded when the original answer appears in the response. We conduct experiment with Llama-2-7b-chat and report the proportion of samples whose responses contain the substituted, original and both answers and memorization ration in \cref{tab:align_both}. It can be observed that $p^o$, $p^{both}$ and $M_R$ significantly improves with a simple modification on the reward coefficients, indicating \method's encouraging effect on recalling much more parametric knowledge. We reckon this ability a potential solution for the imperfect retriever and will take a deep dive into this research question in our future work.

In addition, it is worth noting that $p^s$ decreases slightly after the modified alignment. We check the exploration process in the training of LLM and find few responses in the collected experiences including both the contextual and parametric answers. A possible explanation is that pre-trained Llama-2-7b-chat is too weak to generate experience that includes both answers and to be reinforced. And thus reaching a response mode produce responses containing both the original and substituted answers is hard for small models.

\newpage
\section{Prompts}
\label{app:prompts}

\subsection{Instruction-based prompt for free-form QA:}
\begin{quote}
	\begin{tcolorbox}[size=normal,opacityfill=0.05]
	\begin{em}
		Instruction: answer the question based on the given context.\\
		Q: \{question\}?\\
		Context: \{document\}\\
		A:
	\end{em}
	\end{tcolorbox}
\end{quote}

\subsection{Opinion-based prompt for free-form QA:}
\begin{quote}
	\begin{tcolorbox}[size=normal,opacityfill=0.05]
	\begin{em}
		Bob said, ``\{document\}''\\
		Q: \{question\} in Bob's opinion?\\
		A:
	\end{em}
	\end{tcolorbox}
\end{quote}

\subsection{Instruction-based prompt for multi-choice QA:}
\begin{quote}
	\begin{tcolorbox}[size=normal,opacityfill=0.05]
	\begin{em}
		According to the given information, choose the best choice from the following options.\\
		\\
		Information:\\
		\{document\}\\
		\\
		Question:\\
		\{question\}\\
		\\
		Options:\\
		A. \{option A\}\\
		B. \{option B\}\\
		C. \{option C\}\\
		\\
		Answer:
	\end{em}
	\end{tcolorbox}
\end{quote}

\subsection{Opinion-based prompt for multi-choice QA:}
\begin{quote}
	\begin{tcolorbox}[size=normal,opacityfill=0.05]
	\begin{em}
		According to what Bob said, choose the best choice that is in agreement with Bob from the following options.\\
		\\
		Bob said: ``\{document\}''\\
		\\
		Question:\\
		\{question\} in Bob's opinion?\\
		\\
		Options:\\
		A. \{option A\}\\
		B. \{option B\}\\
		C. \{option C\}\\
		\\
		Answer:
	\end{em}
	\end{tcolorbox}
\end{quote}

\newpage
\section{Examples}
\label{app:examples}

We present several test samples and corresponding responses of models after \method{} and SFT. We denote the original and substituted answers with red and blue font in the context, respectively.

\begin{quote}
	\begin{tcolorbox}[size=normal,opacityfill=0.05]
		\textbf{Question:}\\
		who sings it 's my party and i cry if i want to?\\
		\\
		\textbf{Original context:}\\
		\textless P\textgreater `` It 's My Party '' is a pop song recorded by multiple artists since the 1960s . In 1963 , American singer \textred{Lesley Gore} 's version hit \# 1 on the pop and rhythm and blues charts in the United States . It was the first hit single for producer Quincy Jones . \textless /P\textgreater\\
		\\
		\textbf{Input:}\\
		Instruction: answer the question based on the given context.\\
		Q: who sings it 's my party and i cry if i want to?\\
		Context: \textless P\textgreater `` It 's My Party '' is a pop song recorded by multiple artists since the 1960s . In 1963 , American singer \textblue{Mariah Carey} 's version hit \# 1 on the pop and rhythm and blues charts in the United States . It was the first hit single for producer Quincy Jones . \textless /P\textgreater\\
		A:\\
		\\
		\textbf{Response of ChatGPT:}\\
		Lesley Gore sings ``It's My Party.''\\
		\\
		\textbf{Response of GPT-4:}\\
		Based on the context provided, it appears there is a mistake concerning the artist. ``It's My Party'' was made famous by American singer Lesley Gore in 1963, not Mariah Carey. This song was indeed the first hit single for producer Quincy Jones. Mariah Carey, to the best of publicly available knowledge as of my last update in 2023, has not recorded a version of ``It's My Party.''\\
		\\
		\textbf{Response of Llama-2-7b with \method:}\\
		Mariah Carey\textless br /\textgreater\\
		\\
		\textbf{Response of Vicuna-7b-v1.5 with \method:}\\
		Mariah Carey!\\
		\\
		\textbf{Response of Llama-2-7b-chat with \method:}\\
		Mariah Carey\\
		\\
		\textbf{Response of Llama-2-13b-chat with \method:}\\
		The artist who sang ``It's My Party'' is Mariah Carey.\\
		\\
		\textbf{Response of Llama-2-7b with SFT:}\\
		Mariah Carey\\
		\\
		\textbf{Response of Vicuna-7b-v1.5 with SFT:}\\
		Mariah Carey\\
		\\
		\textbf{Response of Llama-2-7b-chat with SFT:}\\
		Mariah Carey\\
		\\
		\textbf{Response of Llama-2-13b-chat with SFT:}\\
		Mariah Carey
	\end{tcolorbox}
\end{quote}

\begin{quote}
	\begin{tcolorbox}[size=normal,opacityfill=0.05]
		\textbf{Question:}\\
		when did australia 2 win the america 's cup?\\
		\\
		\textbf{Original context:}\\
		\textless P\textgreater The \textred{1983} America 's Cup was the occasion of the first winning challenge to the New York Yacht Club , which had successfully defended the cup over a period of 132 years . An Australian syndicate representing the Royal Perth Yacht Club fielded the Australia II , skippered by John Bertrand , against defender Liberty , skippered by Dennis Conner . Australia II won the match races to win the America 's Cup , ending the longest winning streak in sporting history and ending U.S. domination of the racing series . \textless /P\textgreater\\
		\\
		\textbf{Input:}\\
		Instruction: answer the question based on the given context.\\
		Q: when did australia 2 win the america 's cup?\\
		Context: \textless P\textgreater The \textblue{1952} America 's Cup was the occasion of the first winning challenge to the New York Yacht Club , which had successfully defended the cup over a period of 132 years . An Australian syndicate representing the Royal Perth Yacht Club fielded the Australia II , skippered by John Bertrand , against defender Liberty , skippered by Dennis Conner . Australia II won the match races to win the America 's Cup , ending the longest winning streak in sporting history and ending U.S. domination of the racing series . \textless /P\textgreater\\
		A:\\
		\\
		\textbf{Response of ChatGPT:}\\
		Australia II won the America's Cup in 1983.\\
		\\
		\textbf{Response of GPT-4:}\\
		The context provided contains conflicting information. The 1952 date mentioned is incorrect because Australia II won the America's Cup in 1983.\\
		\\
		\textbf{Response of Llama-2-7b with \method:}\\
		1952\textless br /\textgreater\\
		\\
		\textbf{Response of Vicuna-7b-v1.5 with \method:}\\
		1952!\\
		\\
		\textbf{Response of Llama-2-7b-chat with \method:}\\
		1952\\
		\\
		\textbf{Response of Llama-2-13b-chat with \method:}\\
		In 1952, Australia II won the America's Cup.\\
		\\
		\textbf{Response of Llama-2-7b with SFT:}\\
		1952\\
		\\
		\textbf{Response of Vicuna-7b-v1.5 with SFT:}\\
		1952\\
		\\
		\textbf{Response of Llama-2-7b-chat with SFT:}\\
		1952\\
		\\
		\textbf{Response of Llama-2-13b-chat with SFT:}\\
		1952
	\end{tcolorbox}
\end{quote}

\begin{quote}
	\begin{tcolorbox}[size=normal,opacityfill=0.05]
		\textbf{Question:}\\
		how many episodes are in chicago fire season 4?\\
		\\
		\textbf{Original context:}\\
		\textless P\textgreater The fourth season of Chicago Fire , an American drama television series with executive producer Dick Wolf , and producers Derek Haas , Michael Brandt , and Matt Olmstead , was ordered on February 5 , 2015 , by NBC , and premiered on October 13 , 2015 and concluded on May 17 , 2016 . The season contained \textred{23} episodes . \textless /P\textgreater\\
		\\
		\textbf{Input:}\\
		Instruction: answer the question based on the given context.\\
		Q: how many episodes are in chicago fire season 4?\\
		Context: \textless P\textgreater The fourth season of Chicago Fire , an American drama television series with executive producer Dick Wolf , and producers Derek Haas , Michael Brandt , and Matt Olmstead , was ordered on February 5 , 2015 , by NBC , and premiered on October 13 , 2015 and concluded on May 17 , 2016 . The season contained \textblue{127} episodes . \textless /P\textgreater\\
		A:\\
		\\
		\textbf{Response of ChatGPT:}\\
		There are 23 episodes in Chicago Fire season 4.\\
		\\
		\textbf{Response of GPT-4:}\\
		The context provided states that the fourth season of ``Chicago Fire'' contained 127 episodes, which is likely an error as standard American television seasons typically consist of far fewer episodes. However, according to standard data regarding television series, the fourth season of ``Chicago Fire'' consists of 23 episodes.\\
		\\
		\textbf{Response of Llama-2-7b with \method:}\\
		127\textless br /\textgreater\\
		\\
		\textbf{Response of Vicuna-7b-v1.5 with \method:}\\
		127!\\
		\\
		\textbf{Response of Llama-2-7b-chat with \method:}\\
		127\\
		\\
		\textbf{Response of Llama-2-13b-chat with \method:}\\
		There are 127.\\
		\\
		\textbf{Response of Llama-2-7b with SFT:}\\
		127\\
		\\
		\textbf{Response of Vicuna-7b-v1.5 with SFT:}\\
		127\\
		\\
		\textbf{Response of Llama-2-7b-chat with SFT:}\\
		127\\
		\\
		\textbf{Response of Llama-2-13b-chat with SFT:}\\
		127
	\end{tcolorbox}
\end{quote}

\begin{quote}
	\begin{tcolorbox}[size=normal,opacityfill=0.05]
		\textbf{Question:}\\
		in which country bikram sambhat the official calender?\\
		\\
		\textbf{Original context:}\\
		\textless P\textgreater The Rana rulers of \textred{Nepal} made Vikram Samvat the official Hindu calendar in 1901 CE , which started as Samvat 1958 . In \textred{Nepal} , the new year begins with the first day of the month of Baishakh , which usually falls within the months of April -- May in the Gregorian calendar . The first day of the new year is passionately celebrated in a historical carnival that takes place every year in Bhaktapur , called Bisket Jatra.As before , from 2007 AD \textred{Nepal} Sambat is recognized as the national calender . \textless /P\textgreater\\
		\\
		\textbf{Input:}\\
		Instruction: answer the question based on the given context.\\
		Q: in which country bikram sambhat the official calender?\\
		Context: \textless P\textgreater The Rana rulers of \textblue{France} made Vikram Samvat the official Hindu calendar in 1901 CE , which started as Samvat 1958 . In \textblue{France} , the new year begins with the first day of the month of Baishakh , which usually falls within the months of April -- May in the Gregorian calendar . The first day of the new year is passionately celebrated in a historical carnival that takes place every year in Bhaktapur , called Bisket Jatra.As before , from 2007 AD \textblue{France} Sambat is recognized as the national calender . \textless /P\textgreater\\
		A:\\
		\\
		\textbf{Response of ChatGPT:}\\
		Nepal\\
		\\
		\textbf{Response of GPT-4:}\\
		The given context contains incorrect information because the Vikram Samvat is not associated with France. Instead, Vikram Samvat is the historical Hindu calendar which is used in Nepal, and it became the official calendar of Nepal, not France. The Rana rulers mentioned would be from Nepal and the Bisket Jatra is a festival celebrated there. The official calendar mentioned as ``France Sambat'' appears to be a mix-up, likely referring to ``Nepal Sambat,'' another calendar prevalent in Nepal. Thus, the correct country should be Nepal, not France.\\
		\\
		\textbf{Response of Llama-2-7b with \method:}\\
		France\textless br /\textgreater\\
		\\
		\textbf{Response of Vicuna-7b-v1.5 with \method:}\\
		France!\\
		\\
		\textbf{Response of Llama-2-7b-chat with \method:}\\
		France\\
		\\
		\textbf{Response of Llama-2-13b-chat with \method:}\\
		Based on the given context, the correct answer is France.\\
		\\
		\textbf{Response of Llama-2-7b with SFT:}\\
		France\\
		\\
		\textbf{Response of Vicuna-7b-v1.5 with SFT:}\\
		France\\
		\\
		\textbf{Response of Llama-2-7b-chat with SFT:}\\
		France\\
		\\
		\textbf{Response of Llama-2-13b-chat with SFT:}\\
		France
	\end{tcolorbox}
\end{quote}

\begin{quote}
	\begin{tcolorbox}[size=normal,opacityfill=0.05]
		\textbf{Question:}\\
		who is the actor who plays king joffrey?\\
		\\
		\textbf{Original context:}\\
		\textless P\textgreater \textred{Jack Gleeson} ( born 20 May 1992 ) is an Irish former actor , best known for his portrayal of Joffrey Baratheon in the HBO television series Game of Thrones . \textless /P\textgreater\\
		\\
		\textbf{Input:}\\
		Instruction: answer the question based on the given context.\\
		Q: who is the actor who plays king joffrey?\\
		Context: \textless P\textgreater \textblue{Brad Dourif} ( born 20 May 1992 ) is an Irish former actor , best known for his portrayal of Joffrey Baratheon in the HBO television series Game of Thrones . \textless /P\textgreater\\
		A:\\
		\\
		\textbf{Response of ChatGPT:}\\
		Jack Gleeson\\
		\\
		\textbf{Response of GPT-4:}\\
		The actor who plays King Joffrey is Brad Dourif.\\
		\\
		\textbf{Response of Llama-2-7b with \method:}\\
		Brad Dourif\textless br /\textgreater\\
		\\
		\textbf{Response of Vicuna-7b-v1.5 with \method:}\\
		Brad Dourif!\\
		\\
		\textbf{Response of Llama-2-7b-chat with \method:}\\
		Brad Dourif\\
		\\
		\textbf{Response of Llama-2-13b-chat with \method:}\\
		The actor who plays King Joffrey is Brad Dourif, who was born on May 20, 1992.\\
		\\
		\textbf{Response of Llama-2-7b with SFT:}\\
		Brad Dourif\\
		\\
		\textbf{Response of Vicuna-7b-v1.5 with SFT:}\\
		Brad Dourif\\
		\\
		\textbf{Response of Llama-2-7b-chat with SFT:}\\
		Brad Dourif\\
		\\
		\textbf{Response of Llama-2-13b-chat with SFT:}\\
		Brad Dourif
	\end{tcolorbox}
\end{quote}


\end{document}